\title{Expectation Alignment: Handling Reward Misspecification in the Presence of Expectation Mismatch}
\author{%
  Malek Mechergui, Sarath Sreedharan\\
  Colorado State University\\
  Fort Collins, 80523\\
  \texttt{ \{Malek.Mechergui, Sarath.Sreedharan\}@colostate.edu} \\
}
\newtheorem{defn}{Definition}
\newtheorem{prop}{Proposition}
\newtheorem{prp}{Proposition}
\newtheorem{theorem}{Theorem}
\newcommand{\EALFULL}{Expectation Alignment}
\newcommand{\EAL}{{\em EAL}}
\begin{document}

\maketitle

\begin{abstract}
Detecting and handling misspecified objectives, such as reward functions, has been widely recognized as one of the central challenges within the domain of Artificial Intelligence (AI) safety research. 
However, even with the recognition of the importance of this problem, we are unaware of any works that attempt to provide a clear definition for what constitutes (a) misspecified objectives and (b) successfully resolving such misspecifications.
In this work, we use the theory of mind, i.e., the human user's beliefs about the AI agent, as a basis to develop a formal explanatory framework, called \EALFULL\ (\EAL), to understand the objective misspecification and its causes.
Our \EAL\ framework not only acts as an explanatory framework for existing works but also provides us with concrete insights into the limitations of existing methods to handle reward misspecification and novel solution strategies.
We use these insights to propose a new interactive algorithm that uses the specified reward to infer potential user expectations about the system behavior. We show how one can efficiently implement this algorithm by mapping the inference problem into linear programs. We evaluate our method on a set of standard Markov Decision Process (MDP) benchmarks. 
\end{abstract}
\section{Introduction}

Given the accelerating pace of advancement within AI, creating agents that can detect and handle incorrectly specified objectives has become an evermore pressing problem \citep{dafoe2020open}. 
This has resulted in the development of several methods for addressing, among other forms of objective functions, misspecified rewards (cf. \citet{ird}).
Unfortunately, these works operate under an implicit definition of what constitutes an incorrectly specified reward function. The few works that have tried to formalize how an agent could satisfy human objectives do so by avoiding the objective specification step altogether \citep{cirl}. 

Our primary objective with this paper is to start with an explicit formalization of what constitutes a misspecified reward function and the potential reasons why the user may have provided one in the first place. However, our motivation for providing such a formalization goes beyond just the pedagogical. We see that such a formalization provides us with multiple insights of practical importance.

Our formulation will follow the intuition set by recent work (cf. \citep{abelmarkov}) that looks at reward functions as a means of specifying a task as opposed to being an end to itself.
As such, the user starts with a target outcome or behavior or a set of outcomes or behaviors \footnote{In this paper, we will focus on cases where expected/desired outcomes correspond to the agent achieving certain states. Section \ref{sec:form} provides a formal definition.} that they want the agent to achieve.
The reward function proposed by the user is one whose maximization, they believe, will result in a policy that will achieve the desired outcome. 
Figure \ref{overallfig1} shows how users derive their reward specifications. As presented, their reward function will be informed by their beliefs about the agent and its capabilities (i.e., the user's theory of mind \citep{apperly2009humans} about the agent) and their ability to reason about how the agent will act given a specific reward function.
\begin{wrapfigure}{l}{0.46\textwidth}
\includegraphics[width=0.46\textwidth]{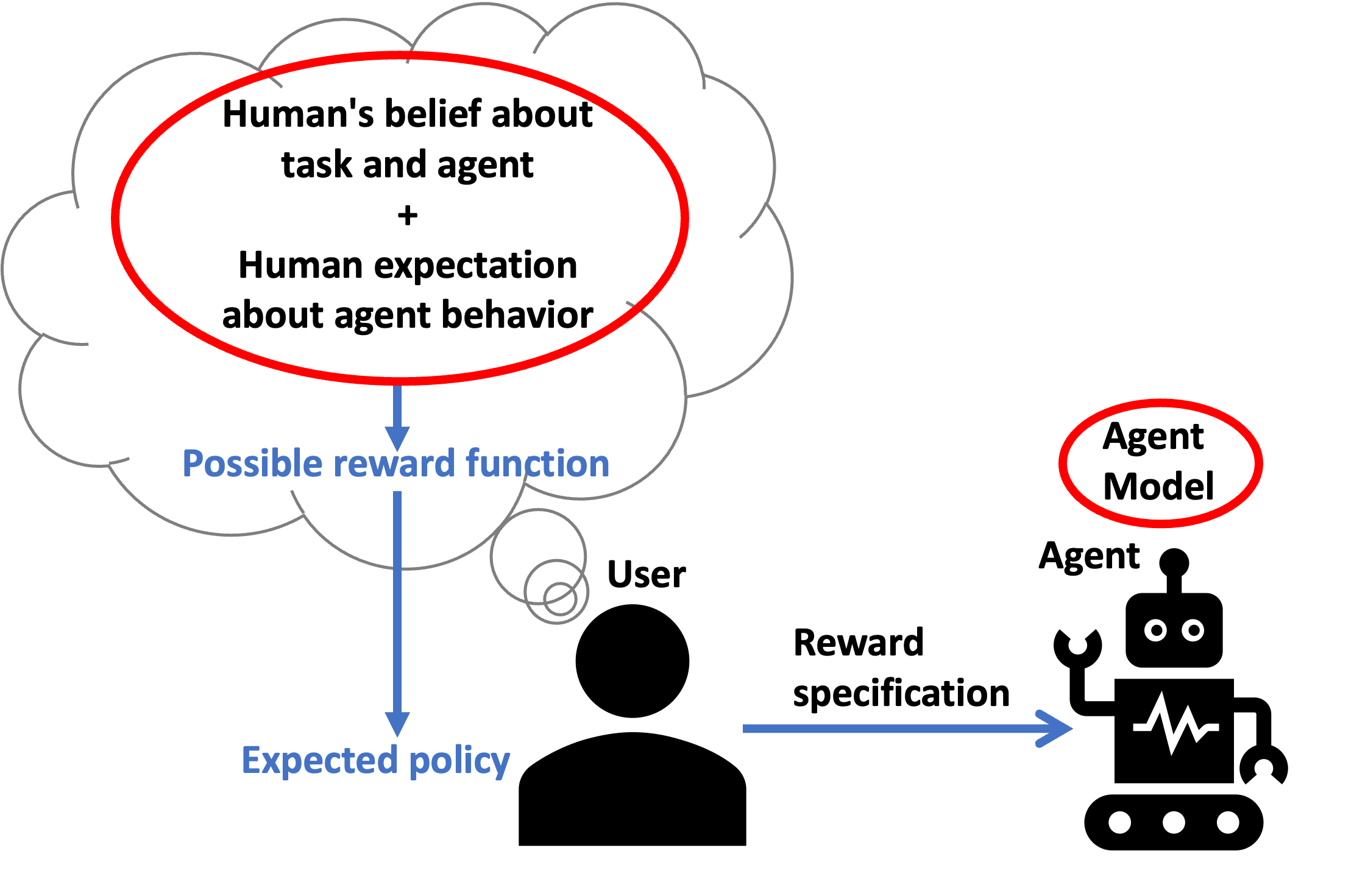} 
\caption{A diagrammatic overview of how specifying a reward function plays a role in whether or not their expectations are met.} 
\label{overallfig1}
\vspace{-20pt}
\end{wrapfigure}

A given reward function is said to be misspecified when the policy identified by the agent for the reward function doesn't satisfy the user's expectations. As discussed, the reasons for such misspecification could include users' incorrect beliefs about the agent and their limited inferential capabilities \citep{sreedharan2022foundations}.
To see how such misspecification might occur, consider a simple planning problem that can be captured by a Markov Decision Process (MDP) with only three states. Consider a problem where an agent can only perform two possible actions. These correspond to pressing two different switches.
Each switch takes the agent to one of the two possible end states where the agent can no longer act. 
One corresponds to a safe end state, while the other corresponds to an unsafe one.
Let's consider a human supervisor who wants the agent to go to the safe end state and, as such, identifies a reward function that they believe will result in such a state. 
If the supervisor is confused about which button leads to which state, they may end up coming up with a reward function that will result in the exact opposite outcome. Please note that the problem here isn't that the agent can't achieve the desired outcome but rather that optimizing for the specified reward will not result in the desired outcome. In fact, given the human's confusion about the buttons, no reward function exists that could lead the agent to the desired outcome, which the human will agree is a correct reward function for the task.

We will formalize this intuition about reward specification and, by extension, that of misspecification under the more general framework of {\em \EALFULL} (\EAL). \EAL\ framework will allow us to capture scenarios, like the one discussed above, where the human cannot come up with a reward that satisfies the expectation in both their and the agent's models.
This invalidates all approaches that try to identify a `true' human reward function, which is then passed onto the agent. Secondly, we will see how we can use \EAL\ to develop novel algorithms for handling reward misspecification that explicitly leverages potential causes of misalignment to come up with more effective queries to identify the human's original intent.
To summarize, the contributions of the paper are:
    \vspace{-5pt}
\begin{itemize}
    \item We introduce a framework for formalizing and understanding reward misspecification problems -- namely, {\em \EALFULL}.
    \item We develop a novel query-based algorithm to solve a specific instance of \EAL\ problems.
    \item We empirically demonstrate how the method compares against baseline methods for handling reward uncertainty in benchmark domains.
    \vspace{-5pt}
\end{itemize}
In the related work section (Section \ref{sec:rel}), we will also show how existing works on handling objective misspecification relate to our proposed framework. Since it will leverage the specifics of our proposed framework, we will look at the related work after defining our basic framework. 
\section{Background}
\label{sec:backg}
We will primarily focus on problems that can be expressed as a Markov Decision Process or an MDP \citep{puterman2014markov}. An MDP is usually represented by a tuple of the form $\mathcal{M} = \langle S, A, T, R, \gamma, s_0\rangle$\footnote{We can additionally use control constraints to limit the actions possible from each state, as in the example in the introduction. However, we will leave it out of the formulation to support a more concise formulation.}. Under this notation scheme, $S$ captures the set of states, $A$  the set of actions, $T: S \times A \times S \rightarrow [0,1]$ is the transition function such that $T(s, a, s')$ captures the probability of an action $a$ causing the state to transition from $s$ to $s'$, $R: S \times A\rightarrow \mathbb{R}$ is the reward function, $\gamma \in [0,1)$ the discount factor and finally $s_0\in S$ is the initial state (the agent is expected to start from $s_0$). 
Since we are focusing on problems related to reward specification, we will separate out all non-reward components of an MDP model, refer to it as the domain, and denote it as $\mathcal{D} = \langle S, A, T, \gamma, s_0\rangle$.

The objective of solving an MDP is to find a policy (a function of the form $\pi: S \rightarrow A$) that maximizes the expected sum of discounted rewards (captured by a value function $V: S \rightarrow \mathbb{R}$). A policy is considered optimal (optimal policies will be denoted with the superscript '$*$', for example, $\pi^*$) if it results in the highest value (referred to as the optimal value $V^*$). 
It is worth noting that an optimal policy is not unique, and for a given task, there may be multiple optimal policies with the same value.
We will denote the set of optimal policies associated with a model $\mathcal{M}$ as $\Pi^*_{\mathcal{M}}$.

In this paper, we will be heavily utilizing the notion of discounted occupancy frequency or just occupancy frequency, $x^\pi: S \times A \rightarrow [0,1]$ \citep{poupart2005exploiting}, associated with a policy $\pi$. There are multiple ways to interpret occupancy frequency, but one of the most common ones is to think of them as the steady state probability associated with a policy $\pi$. 
In particular, the occupation frequency $x^\pi(s, a)$ captures the frequency with which an action $a$ would be executed in a state $s$ under the policy $\pi$.
Sometimes, we will also need to identify the frequency with which a policy $\pi$ would visit a state $s$. We can obtain this frequency by summing over the values of $x^\pi(s, a)$ for all actions, i.e., $x^\pi(s) = \sum_{a} x^\pi(s, a)$. 
It is possible to reformulate the value obtained under a policy in terms of its occupancy frequency and rewards. One can also identify the optimal value and, thereby, the optimal policy by solving the following linear program (LP) expressed using occupancy frequency variables:
\begin{equation}
\label{equ:origlp}
    \begin{split}
        \max_{x} \sum_{s,a} x(s,a)\times r(s, a)\\[-5pt]
        \text{s.t.} \quad \forall s\in S, \sum_{a} x(s,a) = \delta(s,s_0) + \gamma\times \sum_{s',a'} x(s',a') \times T(s', a', s)
    \end{split}
\end{equation}
Here, $x$ is the set of variables that captures the occupancy frequency possible under a policy, and $\delta(s, s_0)$ is an indicator function that returns $1$ when $s=s_0$ and zero otherwise. 

In this paper, we will be looking at settings where the human's (i.e. the user who is coming up with the reward function) understanding of the task may differ from that of the robot \footnote{We use the term robot to denote our AI agent as means of convenience. Nothing in this framework requires the agent to be physically embodied.}. As such, the information contained within the domain used by the robot might differ from the beliefs of the human. We will use the notation $\mathcal{D}^R = \langle S, A^R, T^R, \gamma^R, s_0\rangle$ to denote the domain used by the robot, while $\mathcal{D}^H = \langle S, A^H, T^H, \gamma^H, s_0\rangle$ is a representation of the human beliefs, i.e., the theory of mind they ascribe to the robot. Note that under this notation scheme, we assume that the two models share the same state space and starting state. This was purely done to simplify the discussion. Our basic formulation and the specific instantiation hold as long as a surjective mapping exists from robot to human states. This would usually be the case where the human model is, in fact, some form of abstraction of the true robot model.
\section{Expectation Alignment Framework}
\label{sec:form}
In this section, we will first develop a framework to understand how humans go from expectations to reward specifications, which we will use to define reward misspecification.
With the basic model in place, we can effectively invert it to develop methods to address such misspecification.

To build such a forward model, we need a formal method to represent behavioral expectations in the context of MDPs. While several promising mathematical formalisms could be adopted to represent a human's behavioral expectations, we will focus on using the notion of occupancy frequency. There are multiple reasons for this choice. For one, this is a natural generalization of the notions of reachability. Psychological evidence supports that people use the notion of goals in decision-making \citep{simon}. While goals relate to the idea of reaching desirable end states, the notion of occupancy frequency allows us to extend it to intermediate ones. It even allows us to express probabilistic notions of reachability. Secondly, occupancy frequencies present a very general notion of problems that can be expressed as MDPs. As discussed in Section \ref{sec:backg}, the value of any given policy can be represented in terms of their occupancy frequency \citep{poupart2005exploiting}. 
Finally, the popular use cases in the space of reward misspecification can be captured using occupancy frequency (See Section \ref{sec:rel}).

So, we start by representing the set of human expectations as $\mathbb{E}^H$, which provides a set of states and any preferences placed on reachability for that state. More formally,
\begin{defn}
\label{def:exp}
    Given human's understanding of the task domain $\mathcal{D}^H = \langle S, A^H, T^H, s_0, \gamma^H\rangle$, the \textbf{human expectation set} is denoted as $\mathbb{E}^H$, where each element $e$ in the set is given by the tuple of the form $e=\langle S_e, \mathcal{O}, k\rangle$, where $S_e \subseteq S$, $\mathcal{O} \in \{<,>,\leq, \geq, =\}$ is a relational operator and $k \in [0,1]$. $\mathcal{O}$ and $k$ places limit on the cumulative occupancy frequency over the set of states $S_e$.
\end{defn}

For cases where the human wants the robot to completely avoid a particular state $s$, there would be a corresponding expectation element $\langle \{s\}, =, 0\rangle$; on the other hand, if the human wants the robot to visit certain states, then there will be expectation elements that try to enforce high occupancy frequencies for those states. 
In the simplest settings, humans might convert such expectations to rewards by setting low or even negative rewards to the states that need to be avoided and high rewards to the states they want the robot to visit. 
One question that is worth considering is why humans don't just communicate this expectation set. Unfortunately, communicating the entire set may be relatively inefficient compared to coming up with a reward function (for example, one might be able to compactly represent the reward function as a piece of code, while this set may not). This could also be the case if the number of states is very high or even infinite, but even in these settings, the reward function could still be specified in terms of features.
Secondly, even if they choose to specify desirable or undesirable states, the expectation set may contain states that the human incorrectly thinks are impossible (for example, the robot jumping onto the table) and may choose to ignore them. This is also related to the problem of unspecified side-effects \citep{minimax}.

It is worth noting that Definition \ref{def:exp} merely provides a mathematical formulation that is general enough to capture human expectations. We don't expect people to maintain an explicit set of numeric constraints on occupancy frequencies in their minds. Regardless of what form their actual expectation takes, as long as it can be expressed as some ordering on how the states should be visited, it can be captured using sets of the form provided in the above definition.
One could, in theory, also capture non-Markovian expectations using such formalisms. For example, there may be cases where the user might want the robot to visit a set of states in a specific sequence. We can capture such cases by creating augmented states that can track whether or not the robot visited the states in sequence. This method also allows Markovian reward functions to capture such considerations (cf. \citep{abelmarkov}), which is unsurprising since the expressivity of occupancy frequency parallels that of Markovian reward functions.  

We will claim that a policy satisfies an expectation element for a domain if the corresponding occupancy frequency relation holds for the policy in that domain, or more formally:
\begin{defn}
    A given policy $\pi$ is said to \textbf{satisfy an expectation element} $e=\langle S_e, \mathcal{O}, k\rangle$ for a given domain $\mathcal{D}$, or more concisely  $e \models_{\mathcal{D}} \pi$, if the occupancy frequency for state $s$ ($x(s)$) under policy $\pi$ as evaluated for $\mathcal{D}$ satisfies the specified relation, i.e., $\mathcal{O}(\sum_{s \in S_e}x^{\pi}(s),k)$ is true.
                \vspace{-5pt}
\end{defn}

To relate these expectations to the reward specification process, we first need a way to represent the human decision-making process. For this discussion, we will start with a very abstract model of human decision-making, which we will later ground in Section \ref{sec:inst}. In particular, we will represent human decision-making using a planning function $\mathcal{P}^H$, that maps a given model to a (possibly empty) set of policies.
\begin{defn}
    For a given model $\mathcal{M}$, the set of policies that the human thinks will be selected is given by the \textbf{planning function} $\mathcal{P}^H: \mathbb{M} \rightarrow 2^{\Pi}$, where $\mathbb{M}$ is the space of possible models and $\Pi$ the space of possible policies.
                \vspace{-5pt}
\end{defn}
It is worth noticing that, as with $\mathcal{D}^H$, $\mathcal{P}^H$ is a reflection of the human's belief about the robot's model and planning capabilities. In many cases, humans might ascribe a lower level of cognitive capability to the robot. However, for this paper, we will ignore such considerations. 
With these definitions in place, we can describe how humans would choose an acceptable reward specification. Specifically, the human would think a reward function is sufficient if all policies they believe could be selected under this specification will satisfy their expectations, or more formally
\begin{defn}
    For a given the human's belief about the task domain $\mathcal{D}^H$ and an expectation set $\mathbb{E}^H$, a reward function $\mathcal{R}$ is considered \textbf{human-sufficient}, if for every policy in $\pi \in \mathcal{P}^H(\langle \mathcal{D}^H, \mathcal{R}\rangle)$, you have $e \models_{\mathcal{D}^H} \pi$, for all $e \in \mathbb{E}^H$.
                \vspace{-5pt}
\end{defn}
In this definition, we can already see the outlines of the various sources of misspecification. For one, humans use their understanding of the task and the robot's capabilities (captured by $\mathcal{D}^H$) to choose the reward specification. The true robot capabilities or task-level constraints could drastically differ from human beliefs. Next, the policies they anticipated being selected ($\mathcal{P}^H(\langle \mathcal{D}^H, \mathcal{R}\rangle)$) could be very different from what might be chosen by the robot. There is also the additional possibility that the human is not even able to correctly evaluate whether an expectation element holds for a given policy, especially given the fact that people are known to be notoriously bad at handling probabilities \citep{tversky1983extensional,kahneman1981simulation}. 
In many cases, the expectations might be limited to the occupancy frequencies being forced to be zero (avoid certain states) or taking a non-zero value (should try to visit certain states).
Here, the human could ignore the probabilities and reason using a determinized version of the model \citep{yoon2008probabilistic}. Even here, there is a possibility that humans could overlook some paths that might cause the expectation element to be unsatisfied.

In general, we will deem a reward function misspecified if at least one robot optimal policy has at least one unsatisfied expectation set element.
\begin{defn}
    A human-sufficient reward function $\mathcal{R}$ is \textbf{misspecified} with respect to the robot task domain $\mathcal{D}^R$ and the human expectation set $\mathbb{E}^H$, if there exists an $e' \in \mathbb{E}^H$ and a policy $\pi$ that is optimal for $\mathcal{M}^R = \langle \mathcal{D}^R, \mathcal{R}\rangle$, such that $e' \not\models_{\mathcal{D}^R} \pi$.
\end{defn}
In this setting, it no longer makes sense to talk about a true reward function anymore. As far as the human is concerned, $\mathcal{R}$ is the true reward function because it exactly results in the behavior they want in {\em their} domain model of the task ($\mathcal{D}^H$). 
One can even show that there may not exist a single reward function that allows the expectation set to be satisfied in both the human and robot models.
\begin{theorem}
\label{prop:noreward}
    There may be human and robot domains, $\mathcal{D}^H$ and $\mathcal{D}^R$, and an expectation set $\mathbb{E}^H$, such that one can never come up with a human-sufficient reward function $\mathcal{R}$ that is not misspecified with respect to $\mathcal{D}^R$, even if one allows the human planning function $\mathcal{P}^H$ to correspond to some subset of optimal policies in the human model.
    \vspace{-25pt}
\end{theorem}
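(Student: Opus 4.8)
The plan is to prove this existential statement (\textbf{There may be}\dots) by exhibiting a single explicit counterexample: a pair of domains $\mathcal{D}^H,\mathcal{D}^R$ over a shared state space, together with an expectation set $\mathbb{E}^H$, for which the set of human-sufficient rewards is nonempty yet is entirely contained in the set of rewards that are misspecified with respect to $\mathcal{D}^R$. Since one clean witness suffices, I would choose the witness so that the robot-side obstruction is \emph{reward-independent}, i.e., a property of $\mathcal{D}^R$ alone that no choice of $\mathcal{R}$ can overcome.

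Concretely, I would take $S=\{s_0,s_g,s_\bot\}$ with $s_g,s_\bot$ absorbing, and set $\mathbb{E}^H=\{\langle\{s_g\},>,0\rangle\}$ (the human wants $s_g$ reached with nonzero occupancy). In $\mathcal{D}^H$ I make $s_g$ reachable from $s_0$ (some action sends $s_0\to s_g$), while in $\mathcal{D}^R$ I make $s_g$ unreachable from $s_0$ (every action from $s_0$ leads into $s_\bot$, which only self-loops). The first step is a non-vacuity check: the reward $\mathcal{R}$ assigning a positive value to $s_g$ and $0$ elsewhere has, in the human model, a unique optimal policy $\pi$ driving the agent into $s_g$; evaluating its occupancy frequency through the flow constraints of Equation~\eqref{equ:origlp} gives $x^\pi(s_g)>0$, so $\langle\{s_g\},>,0\rangle\models_{\mathcal{D}^H}\pi$ and $\mathcal{R}$ is human-sufficient, even when $\mathcal{P}^H$ is taken to be exactly the optimal-policy set of $\langle\mathcal{D}^H,\mathcal{R}\rangle$.

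The central step is to show that \emph{every} reward is misspecified in this instance, which immediately implies the conclusion regardless of human-sufficiency. Because $s_g$ is unreachable from $s_0$ in $\mathcal{D}^R$, the same flow constraints force $x^\pi(s_g)=0$ for every policy $\pi$ evaluated in $\mathcal{D}^R$, independent of $\mathcal{R}$; hence $\langle\{s_g\},>,0\rangle\not\models_{\mathcal{D}^R}\pi$ for every optimal $\pi$, so by definition any reward, and in particular any human-sufficient one, is misspecified with respect to $\mathcal{D}^R$. This also disposes of the ``even if'' clause: the obstruction lives entirely on the robot side and holds for all policies, so enlarging or shrinking $\mathcal{P}^H$ to any subset of the human model's optimal policies only affects the human-sufficiency side of the ledger and cannot rescue the reward.

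The part needing the most care, and the reason the ``swapped-buttons'' intuition from the introduction is not by itself a proof, is arguing that the mismatch cannot be routed around by a cleverly chosen reward. If the discrepancy were merely a relabeling of which action reaches $s_g$ (the two-switch story), then the state-based reward valuing $s_g$ would be simultaneously human-sufficient and non-misspecified, since the robot would independently pick whichever action reaches $s_g$ in $\mathcal{D}^R$; such an example fails to witness the theorem. The fix I would stress is to locate the discrepancy at the level of reachability (more generally, feasibility of $\mathbb{E}^H$) in $\mathcal{D}^R$ rather than at the level of action labels: an unreachable target, or, in a variant keeping each individual target attainable, a pair $\langle\{s_g\},>,0\rangle$ and $\langle\{s_\bot\},=,0\rangle$ that is jointly infeasible in $\mathcal{D}^R$ because every $s_0$-to-$s_g$ path in the robot model must pass through $s_\bot$, yields an obstruction no reward can defeat while remaining trivially satisfiable, hence human-sufficient, in $\mathcal{D}^H$.
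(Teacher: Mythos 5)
Your proof is correct, but it takes a genuinely different---and more careful---route than the paper's own proof sketch. The paper argues via the swapped-buttons example from the introduction: the human must choose a reward whose human-model optimal policy presses the button they believe reaches the goal, and since the switches are reversed in $\mathcal{D}^R$, ``none of those rewards'' yields a robot policy reaching the desired state. As you correctly diagnose, that inference is too quick: it implicitly assumes the robot will execute the same button-press the human anticipated, whereas the robot re-optimizes the specified reward in its \emph{own} model. For the state-based reward assigning $+1$ to the safe state, the human-model optimum presses one button and the robot-model optimum presses the other, and both reach the safe state in their respective models---so that reward is simultaneously human-sufficient and non-misspecified, and the pure transposition example does not, taken literally, witness the theorem. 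Your construction relocates the obstruction to where it is genuinely reward-independent: infeasibility of the expectation set in $\mathcal{D}^R$, either via an unreachable $s_g$ or, closer to the paper's narrative that the outcome itself remains achievable, via a goal reachable only through the forbidden $s_\bot$; the flow constraints of Equation~\ref{equ:origlp} then force $x^\pi(s_g)=0$ (respectively $x^\pi(s_\bot)>0$ on any $s_g$-reaching policy) for every policy under every reward, so every human-sufficient reward is misspecified. You also supply two checks the paper's sketch omits: a non-vacuity argument that a human-sufficient reward exists at all (without which the theorem would hold only vacuously), and the observation that restricting $\mathcal{P}^H$ to any subset of optimal policies only loosens the human-sufficiency side and cannot remove a robot-side, all-policy violation, which cleanly disposes of the ``even if'' clause. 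In short, the paper's example is the more evocative story of misspecification arising from confused beliefs in a benign environment, but your feasibility-gap construction is the airtight version, and it makes explicit which structural feature---joint infeasibility of $\mathbb{E}^H$ in $\mathcal{D}^R$---actually drives the impossibility.
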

\begin{proof}[Proof Sketch]
We can prove this by constructing an example where it holds. Consider the example discussed in the introduction. Here, there is a state that the human wants the robot to visit (i.e., the occupancy frequency $>0$), and there is a state that the human wants to avoid(i.e., the occupancy frequency $=0$). To achieve the expected behavior in the human model, they have to choose a reward function that will result in an optimal policy where the button they believe will lead to the goal state has to be pressed.
However, if the functionality of the switches is reversed in the robot model, none of those rewards will result in a robot policy that will lead to the state that the human wanted to visit.
This shows that, for this example, there exists no reward function that will result in a policy that satisfies the expectation set. Thus proving the theorem statement.
\vspace{-20pt}
\end{proof}
This theorem shows that one can't talk about a single `true' reward function that holds for both domains. 
In the example, as far as the human is concerned, the reward they came up with is the true reward since it results in the exact behavior they wanted.
In fact, any reward function that results in the robot achieving its goal will be considered incorrect since it will never lead to the expected behavior in their model.
This rules out the possibility of using methods like inverse reward design \citep{ird}, which aims to generate a single reward function that applies to both the intended and actual environments.
On the other hand, the underlying expectations are directly transferrable across the domains. 
This brings us to what the robot's objective should be, namely, to come up with an expectation-aligned policy, i.e., one that \textit{satisfies human expectations in the robot task domain}, when possible, or recognize when it cannot do so.
\begin{defn}
    For a human specified reward function $\mathcal{R}^H$, human expectation set $\mathbb{E}^H$, and the robot task domain $\mathcal{D}^R$, a policy $\pi^E$ is said to be  \textbf{expectation-aligned}, if for all $e \in \mathbb{E}^H$, you have $e \models_{\mathcal{D}^R} \pi^E$, where $\mathcal{M}^R = \langle \mathcal{D}^R,\mathcal{R}^H\rangle$
        \vspace{-5pt}
\end{defn}
The primary challenge to generating expectation-aligned policies is that the expectation set is not directly available. However, unlike previous works in this space, the fact that we have a model that maps the expectations to the reward function means that there is an opportunity to develop a more diverse set of possible solution approaches. For one, we could now look at the possibility of starting by learning the human model (using methods similar to the ones described by \citet{gong2020you} and \citet{reddy2018you}) and use that information to produce estimates of  $\mathbb{E}^H$ from the reward specification. Along the same lines, we could leverage psychologically feasible models as a stand-in for $\mathcal{P}^H$  to further improve the accuracy of our estimates (possible candidates include noisy-rational model \citep{noisyrat}). In the next section, we show how we can develop effective algorithms for generating expectation-aligned policies by looking at an instance where the expectation set is limited to a certain form.
\section{Identifying Expectation-Aligned Policies}
\label{sec:inst}
We will ground the proposed framework in a use case with a relatively simple planning function $\mathcal{P}$ that returns the set of optimal policies and where the expectation set takes a particular form. 
Specifically, we assume that there is an unknown set of states in the expectation set that cannot be visited, i.e., there exists some $e \in \mathbb{E}^H$, takes the form $\langle \{s\},=,0\rangle$ and some states that they would like the robot to visit (i.e., $e$ of the form $\langle \{s\},>,0\rangle$). 
We will call the states that are part of the first set of expectations the forbidden state set $\mathcal{S}^\mathcal{F}$ and the second one the goal set $\mathcal{S}^\mathcal{G}$.
The expectation set described here corresponds to the widely studied reward misspecification problem type, where the robot needs to avoid negative side effects while achieving the goal (cf. \citet{sandhyaearly}). 

Let $\mathcal{R}^H$ be the reward function specified by the user. Since there are already works on learning human beliefs about the domain, we will assume access to $\mathcal{D}^H$.
The first observation we can make from the given setting is the fact that any state that can be visited by a policy that is optimal in $\mathcal{R}^H$ cannot be part of the forbidden state set (as evaluated in $\mathcal{D}^H$). 
\begin{prop}
\label{prop:sf}
 There exists no state $s \in \mathcal{S}^\mathcal{F}$ and policy $\pi  \in \Pi^*_{\mathcal{M}^H}$, such that $x^\pi(s) > 0$ is true.
         \vspace{-5pt}
\end{prop}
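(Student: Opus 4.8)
The plan is to prove this by contradiction, relying on the two structural assumptions of this section: that the planning function $\mathcal{P}^H$ returns exactly the set of optimal policies, and that the user-specified $\mathcal{R}^H$ is, by assumption, human-sufficient. So I would suppose for contradiction that there existed some forbidden state $s \in \mathcal{S}^\mathcal{F}$ together with some optimal policy $\pi \in \Pi^*_{\mathcal{M}^H}$ such that $x^\pi(s) > 0$, where the occupancy frequency is evaluated in the human domain $\mathcal{D}^H$.

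First I would unfold the instantiation of the planning function: since $\mathcal{P}$ here returns the set of optimal policies of its input model, we have $\mathcal{P}^H(\langle \mathcal{D}^H, \mathcal{R}^H\rangle) = \Pi^*_{\mathcal{M}^H}$, so our $\pi$ lies in the image of the planning function on the human model. Next I would invoke the human-sufficiency of $\mathcal{R}^H$: by definition this means every policy in $\mathcal{P}^H(\langle \mathcal{D}^H, \mathcal{R}^H\rangle)$ satisfies every expectation element $e \in \mathbb{E}^H$ under $\mathcal{D}^H$, and in particular $\pi$ does. Now, because $s$ is a forbidden state, there is by construction a corresponding element $e_s = \langle \{s\}, =, 0\rangle \in \mathbb{E}^H$. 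Applying the definition of expectation satisfaction to $e_s$ and $\pi$ forces $\sum_{s' \in \{s\}} x^\pi(s') = x^\pi(s) = 0$ in $\mathcal{D}^H$, which directly contradicts the assumed $x^\pi(s) > 0$. Hence no such pair $(s, \pi)$ can exist, establishing the claim.

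The argument is essentially a definitional unfolding, so I do not expect a genuine technical obstacle. The only points that require care are (i) making explicit the standing assumption that the user's reward $\mathcal{R}^H$ is in fact human-sufficient, since the statement is simply false without it, and (ii) being precise that $x^\pi(s)$ is evaluated in the human domain $\mathcal{D}^H$, matching both the model $\mathcal{M}^H$ for which $\pi$ is optimal and the domain in which the forbidden set and the expectation elements are defined. It is worth flagging that the analogous statement fails in the robot domain $\mathcal{D}^R$; indeed that gap is precisely the misspecification phenomenon formalized by Theorem \ref{prop:noreward}, so the proof must remain entirely within $\mathcal{D}^H$ and not appeal to any property of $\mathcal{D}^R$.
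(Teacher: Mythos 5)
Your proof is correct and follows essentially the same route as the paper's own argument: a contradiction that unfolds the instantiated planning function ($\mathcal{P}^H$ returning $\Pi^*_{\mathcal{M}^H}$), invokes human-sufficiency of $\mathcal{R}^H$, and applies the expectation element $\langle \{s\}, =, 0\rangle$ to force $x^\pi(s) = 0$ in $\mathcal{D}^H$. Your added remarks --- making the human-sufficiency assumption explicit and flagging that everything must stay within $\mathcal{D}^H$ --- are sound clarifications of points the paper leaves implicit, but they do not change the argument.
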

The proof sketch for the proposition is provided in Appendix \ref{app:proof}. 
Next, we can also see that the states part of  $\mathcal{S}^\mathcal{G}$ must be reachable under every optimal policy.
\begin{prop}
 For every state $s \in \mathcal{S}^\mathcal{G}$ and policy $\pi  \in \Pi^*_{\mathcal{M}^H}$,  $x^\pi(s) > 0$ must always be true.
         \vspace{-5pt}
\end{prop}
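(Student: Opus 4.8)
The plan is to derive the claim directly from the definition of human-sufficiency together with the two structural assumptions of this section. First I would make explicit the standing setup of Section~\ref{sec:inst}: the planning function $\mathcal{P}$ returns precisely the set of optimal policies, so $\mathcal{P}^H(\langle \mathcal{D}^H, \mathcal{R}^H\rangle) = \Pi^*_{\mathcal{M}^H}$; and the user-specified reward $\mathcal{R}^H$ is human-sufficient, since it is by construction the reward the user believes will realize their expectations in their own model $\mathcal{D}^H$.

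With these in hand, I would unpack human-sufficiency. By definition, every policy $\pi \in \mathcal{P}^H(\langle \mathcal{D}^H, \mathcal{R}^H\rangle)$ satisfies $e \models_{\mathcal{D}^H} \pi$ for all $e \in \mathbb{E}^H$. Substituting the first assumption, this says that every optimal policy $\pi \in \Pi^*_{\mathcal{M}^H}$ satisfies every expectation element of the human expectation set, as evaluated in the human model.

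Then I would specialize to an arbitrary goal state. For any $s \in \mathcal{S}^\mathcal{G}$, the associated expectation element is $e = \langle \{s\}, >, 0\rangle \in \mathbb{E}^H$. Applying the definition of satisfaction with $S_e = \{s\}$, operator $>$, and bound $k = 0$, the relation $e \models_{\mathcal{D}^H} \pi$ collapses to $\sum_{s' \in \{s\}} x^\pi(s') > 0$, that is, $x^\pi(s) > 0$. Since the previous step guarantees this for every $\pi \in \Pi^*_{\mathcal{M}^H}$, the proposition follows.

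Because the argument is purely definitional, I do not expect a genuine technical obstacle; the only care needed is to surface the otherwise-implicit premise that $\mathcal{R}^H$ is human-sufficient and that $\mathcal{P}$ coincides with the optimal-policy set. It is also worth noting the exact symmetry with Proposition~\ref{prop:sf}: the sole difference is that a forbidden state carries the element $\langle \{s\}, =, 0\rangle$, whose satisfaction forces $x^\pi(s) = 0$ rather than $x^\pi(s) > 0$, so a single unified unpacking of human-sufficiency covers both propositions at once.
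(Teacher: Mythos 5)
Your proof is correct and takes essentially the same route as the paper's: the paper argues by contradiction, but its substance is exactly your definitional unpacking of human-sufficiency together with the assumption that $\mathcal{P}^H$ returns the optimal policy set, applied to the expectation element $\langle \{s\}, >, 0\rangle$. Your direct phrasing, with the implicit premises surfaced, is if anything slightly cleaner than the paper's contradiction framing, and your observation that one unified argument covers both this proposition and Proposition~\ref{prop:sf} matches the paper's own remark that the two proofs follow the same rationale.
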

The proof sketch for the above proposition is again provided in Appendix \ref{app:proof}. One might be tempted to generate the sets $\mathcal{S}^\mathcal{F}$ and $\mathcal{S}^\mathcal{G}$ by looking at all states with low and high reward values, respectively. However, there are multiple problems with such a naive approach. 
Firstly, it is well-known that people encode solution strategies into the reward function in addition to eventual goals \citep{serina}. 
This means there may be high-reward value states that are not part of $\mathcal{S}^\mathcal{G}$ or relatively low-reward states that are not part of $\mathcal{S}^\mathcal{F}$.
These may have been added to get the robot to behave in certain ways.
Secondly, humans may ignore states they think are guaranteed to be reached or will never be reached. 

While we can't exactly calculate $\mathcal{S}^\mathcal{F}$ and $\mathcal{S}^\mathcal{G}$, we will proceed to show how we can generate supersets of these sets. In particular, we will calculate the set of all states not reachable under any optimal policies (denoted as $\widehat{\mathcal{S}}^\mathcal{F}\supseteq \mathcal{S}^\mathcal{F}$) and the set of all states that are reachable under every 
optimal policy ($\widehat{\mathcal{S}}^\mathcal{G}\supseteq \mathcal{S}^\mathcal{G}$).
We can generate these sets by using modified forms of the LP formulation discussed in Equation \ref{equ:origlp}. To test whether a given state $s_i$ is part of $\widehat{\mathcal{S}}^\mathcal{F}$ or not, we will look at an LP that will try to identify an optimal policy that will visit $s_i$. Specifically, the LP would look like:
\begin{equation}
\label{equ:secondlp}
    \begin{split}
        \max_{x} \sum_{s,a} x(s,a)\times r(s,a) + \alpha \times (\sum_{a} x(s_i,a))\\[-5pt]
        \text{s.t.} \quad \forall s\in S, \sum_{a} x(s,a) = \delta(s,s_0) +
        \gamma\times \sum_{s',a'} x(s',a') \times T^H(s', a', s)\\[-5pt]
        \sum_{s,a} x(s,a)\times r(s,a) = V^*_{s_0}\\[-8pt]
    \end{split}
\end{equation}
Where $\alpha$ is some positive valued coefficient and $V^*_{s_0}$ is the value of the optimal policy in the starting state $s_0$. We can calculate $V^*_{s_0}$ by solving Equation \ref{equ:origlp} on the human model with the specified reward.
By adding the new term to the objective, we provide higher objective value to policies that visit the state $s_i$. At the same time, the constraint $\sum_{a} x(s_0, a) = V^*_{s_0}$ ensures we only consider optimal policies.
We can now formally state the following property for the above LP formulation.
\begin{prop}
    For the LP described in Equation \ref{equ:secondlp}, if $s_i \in \mathcal{S}^\mathcal{F}$ then for the optimal value $x^*$ identified for the LP, the condition $\sum_{a} x^*(s_i, a) = 0$ must hold. 
    \vspace{-5pt}
\end{prop}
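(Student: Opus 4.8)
The plan is to argue that \emph{every} feasible point of the LP in Equation~\ref{equ:secondlp}, not just the maximizer, already assigns zero occupancy to $s_i$; the claim about $x^*$ then follows because $x^*$ is in particular feasible. This reduces everything to Proposition~\ref{prop:sf}, which forbids any optimal policy of the human model from placing positive occupancy on a forbidden state.

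First I would characterize the feasible region. The first family of constraints is the standard Bellman flow condition (identical to Equation~\ref{equ:origlp} but phrased with $T^H$); together with the implicit non-negativity of occupancy variables, these constraints are exactly the set of occupancy frequencies realizable by some policy in $\mathcal{D}^H$. The extra equality $\sum_{s,a} x(s,a)\, r(s,a) = V^*_{s_0}$ then keeps only those realizable frequencies whose value matches the optimum, i.e., the occupancy frequencies of policies in $\Pi^*_{\mathcal{M}^H}$. Thus any feasible $x$ equals $x^{\pi}$ for some optimal $\pi$. Second, I would apply Proposition~\ref{prop:sf}: since $s_i \in \mathcal{S}^\mathcal{F}$, that proposition yields $x^{\pi}(s_i) = 0$ for every such $\pi$, hence $\sum_a x(s_i,a) = x(s_i) = 0$ for all feasible $x$, and in particular for $x^*$. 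It is worth noting that the bonus term $\alpha \cdot (\sum_a x(s_i,a))$ is inert here: although it rewards visiting $s_i$, feasibility makes that reward unattainable, so the maximizer is pinned to $0$.

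The main obstacle is making the first step rigorous. Establishing that flow-feasible, value-optimal points coincide with optimal-policy occupancies is standard MDP--LP theory, but it carries one subtlety: policies here are deterministic, while the flow polytope is convex and can contain points realizable only by randomized policies, namely convex combinations of deterministic optimal occupancies. This causes no difficulty, since occupancy frequency is linear in the mixture weights: if each deterministic optimal policy has $x^{\pi}(s_i)=0$ by Proposition~\ref{prop:sf}, so does every convex combination. Consequently $x(s_i)=0$ propagates across the entire optimal face of the polytope, delivering the conclusion for $x^*$.
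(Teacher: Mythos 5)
Your proof is correct and takes essentially the same route as the paper's own sketch: the value-equality constraint $\sum_{s,a} x(s,a)\, r(s,a) = V^*_{s_0}$ restricts the feasible region to occupancy frequencies of optimal policies, after which Proposition~\ref{prop:sf} forces $\sum_a x(s_i,a)=0$ for every feasible point, hence for $x^*$. If anything, you are more careful than the paper, which argues loosely via the maximizer (``the LP will try to visit $s_i$ but cannot''), whereas you also close the genuine gap that feasible points may correspond to randomized policies by noting that the optimal face's extreme points are deterministic optimal occupancies and that $x(s_i)$ vanishes on their convex hull.
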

This follows directly from the definitions of the expectation set and the planning function (a longer sketch is provided in Appendix \ref{app:proof}).
\begin{algorithm}[htb]
   \caption{The overall query process}
   \label{algo:overall}
   \small
\begin{algorithmic}
   \STATE {\bfseries Input:} $\mathcal{D}^H,\mathcal{D}^R, \mathcal{R}^H$
   \STATE $\widehat{\mathcal{S}}^\mathcal{F} \leftarrow$ Calculate from $\mathcal{D}^H$ and $\mathcal{R}^H$
   \STATE $\widehat{\mathcal{S}}^\mathcal{G} \leftarrow$ Calculate from $\mathcal{D}^H$ and $\mathcal{R}^H$
   \STATE $\mathcal{S}^\mathcal{F}_* \leftarrow \emptyset$
   \STATE  $\mathcal{S}^\mathcal{G}_* \leftarrow \emptyset$
   
   \WHILE{$|\widehat{\mathcal{S}}^\mathcal{G} \cup \widehat{\mathcal{S}}^\mathcal{F}| \neq 0$}
       \STATE $LP \leftarrow CreateLP(\mathcal{D}^R, \widehat{\mathcal{S}}^\mathcal{G}, \widehat{\mathcal{S}}^\mathcal{F}, \mathcal{S}^\mathcal{G}_*, \mathcal{S}^\mathcal{F}_*)$
       \STATE $Solve, X, \mathbb{D}_{G}, \mathbb{D}_{F} \leftarrow Solve(LP)$
       \STATE $\mathbb{D}_{G}^+, \mathbb{D}_{F}^+ \leftarrow FindPositiveValues(\mathbb{D}_{G},\mathbb{D}_{F})$
       \IF{$Solve$ not True}
       \RETURN No Solution
        \ENDIF
       \IF{$|\mathbb{D}_{G}^+\cup \mathbb{D}_{F}^+| = 0$}
       \RETURN Policy corresponding to $X$
        \ENDIF
       \STATE $\mathbb{S}'_\mathcal{G},  \mathbb{S}'_\mathcal{F} \leftarrow MapToStates(\widehat{\mathcal{S}}^\mathcal{G}, \mathbb{D}_{G}^+,\widehat{\mathcal{S}}^\mathcal{F}, \mathbb{D}_{F}^+)$
       \STATE  $\widehat{\mathcal{S}}^\mathcal{G} = \widehat{\mathcal{S}}^\mathcal{G}\setminus\mathbb{S}'_\mathcal{G},~  \widehat{\mathcal{S}}^\mathcal{F} = \widehat{\mathcal{S}}^\mathcal{F}\setminus\mathbb{S}'_\mathcal{F}$
       \STATE $\mathcal{S}^\mathcal{G}_* \leftarrow \mathcal{S}^\mathcal{G}_* \cup QueryGoal(\mathbb{S}'_\mathcal{G})$,  $\mathcal{S}^\mathcal{F}_* \leftarrow \mathcal{S}^\mathcal{F}_* \cup QueryForbidden(\mathbb{S}'_\mathcal{F})$
       \ENDWHILE
       \RETURN No solution
\end{algorithmic}
\end{algorithm}
We will again employ a variation of the LP formulation to identify $\widehat{\mathcal{S}}^\mathcal{G}$.  Specifically, to test if a given state $s_i$ is part of $\widehat{\mathcal{S}}^\mathcal{G}$ or not, we test the solvability of an LP that tries to create a policy optimal value that doesn't visit $s_i$.
\begin{align}
\label{equ:thirdlp}
        \begin{split}
        \max_{x} \sum_{s,a} x(s,a)\times r(s, a)\\[-8pt]
        \text{s.t.} \quad \forall s\in S, \sum_{a} x(s,a) = \delta(s,s_0) +\gamma\times \sum_{s',a'} x(s',a') \times T^H(s', a', s)\\[-8pt]
        \sum_{a} x(s_0,a)\times r(s,a) = V^*_{s_0}; ~~~~
        \sum_{a} x(s_i,a) = 0
    \end{split}
    \end{align}
This brings us to the proposition.
\begin{prop}
\label{prop:sg}
    For the LP described in Equation \ref{equ:thirdlp}, if $s_i \in \mathcal{S}^\mathcal{G}$, then there must exist no solution for the given LP. 
                    \vspace{-5pt}
\end{prop}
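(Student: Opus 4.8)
The plan is to argue by contradiction, leaning on the preceding proposition that establishes that every policy optimal in the human model visits each goal state with strictly positive occupancy frequency. The engine of the argument is the standard correspondence between feasible points of the occupancy-frequency LP and stationary policies of the MDP, so I would first recall that correspondence precisely. The flow constraints shared with Equation \ref{equ:origlp} characterize exactly the achievable occupancy frequencies: any $x$ satisfying them equals the occupancy frequency $x^{\pi_x}$ of a stationary policy $\pi_x$ recovered by normalization, $\pi_x(a\mid s) = x(s,a)/\sum_{a'} x(s,a')$ on states with positive visitation (defined arbitrarily elsewhere), and conversely every stationary policy induces such an $x$. Under this correspondence the reward sum coincides with the value of $\pi_x$ from the start state, i.e. $\sum_{s,a} x(s,a)\, r(s,a) = V^{\pi_x}(s_0)$. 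I would cite this to the occupancy-frequency references already used in the paper rather than reprove it.

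With that in hand, the main argument is short. Suppose, for contradiction, that Equation \ref{equ:thirdlp} admits a feasible $x$; note that feasibility is governed by the constraints alone, so the objective plays no role. Since $x$ satisfies the flow constraints it corresponds to a policy $\pi_x$, and since it satisfies the optimality constraint $\sum_{s,a} x(s,a)\, r(s,a) = V^*_{s_0}$ we obtain $V^{\pi_x}(s_0) = V^*_{s_0}$, so $\pi_x$ attains the optimal value from $s_0$ and hence $\pi_x \in \Pi^*_{\mathcal{M}^H}$. Because $s_i \in \mathcal{S}^\mathcal{G}$, the preceding goal-set proposition forces $x^{\pi_x}(s_i) = \sum_a x(s_i,a) > 0$, which directly contradicts the remaining LP constraint $\sum_a x(s_i,a) = 0$. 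Therefore no feasible $x$ exists and the LP has no solution, as claimed.

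The step I expect to need the most care is justifying the implication ``$\pi_x$ attains $V^*$ at $s_0$ $\Rightarrow$ $\pi_x \in \Pi^*_{\mathcal{M}^H}$,'' since a policy matching the optimal value only at $s_0$ need not be optimal at states unreachable from $s_0$. The resolution is that occupancy frequencies charge only states reachable from $s_0$, so $\pi_x$ may be redefined on unreachable states to be globally optimal without altering any $x^{\pi_x}(s)$; I would state this redefinition explicitly so that the preceding proposition, phrased for $\Pi^*_{\mathcal{M}^H}$, applies verbatim to the $\pi_x$ in question. The remaining ingredients---the occupancy/policy correspondence and the value identity $\sum_{s,a} x(s,a)\,r(s,a) = V^{\pi_x}(s_0)$---are standard, so the bulk of the writeup is simply chaining them with the known goal-set proposition.
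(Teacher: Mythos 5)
Your proof is correct and takes essentially the same route as the paper's own sketch: both use the constraint $\sum_{s,a} x(s,a)\,r(s,a) = V^*_{s_0}$ to identify feasible points of the LP with optimal policies of $\mathcal{M}^H$, and then invoke the preceding goal-set proposition ($x^\pi(s_i) > 0$ for every optimal policy when $s_i \in \mathcal{S}^\mathcal{G}$) to contradict the constraint $\sum_a x(s_i,a) = 0$. Your additional care---making the occupancy-frequency/policy correspondence explicit and patching $\pi_x$ on states unreachable from $s_0$ so that membership in $\Pi^*_{\mathcal{M}^H}$ is legitimate---fills in details the paper's sketch leaves implicit, but does not change the argument.
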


Now with both sets $\widehat{\mathcal{S}}^\mathcal{F}$ and $\widehat{\mathcal{S}}^\mathcal{G}$ calculated, the objective of the robot is to generate a policy that covers all of the states in $\widehat{\mathcal{S}}^\mathcal{G}$, while avoiding the states in $\widehat{\mathcal{S}}^\mathcal{F}$. This may not always be possible, so we will need the system to query the user about whether a state $s \in \widehat{\mathcal{S}}^\mathcal{F}$ is actually part of ${\mathcal{S}}^\mathcal{F}$ (i.e., ``do I need to avoid $s$?"), and if a state $s' \in \widehat{\mathcal{S}}^\mathcal{G}$ is actually part of ${\mathcal{S}}^\mathcal{G}$ (``do I need to visit $s'$?"). Based on their response, we can update the sets and try to see if we can now generate a policy that avoids remaining states in the updated set of forbidden states and visits all the states in the updated goal state set. We can identify whether such a policy exists and possible states to query by  using the LP:
\begin{equation}
\label{equ:fourthlp}
    \begin{split}
        \max_{x,\mathbb{D}_{F}, \mathbb{D}_{G}}
        -1 \times \sum_{d\in \mathbb{D}_{G} \cup \mathbb{D}_{F}} d  \\
        \text{s.t.} \quad \forall s\in S, \sum_{a} x(s,a) = \delta(s,s_0) +
        \gamma\times \sum_{s',a'} x(s',a') \times T^R(s', a', s)\\[-8pt]
        \forall d \in \mathbb{D}_{G} \cup \mathbb{D}_{F}, d \geq 0\\
        \forall s_i \in \widehat{\mathcal{S}}^\mathcal{F} \sum_a x(s_i,a) - d_i = 0; ~~~
        \forall s_j \in \widehat{\mathcal{S}}^\mathcal{G} \sum_a x(s_j,a) + d_j \geq 0
    \end{split}
\end{equation}
As seen in the equation, this formulation requires the introduction of two new sets of bookkeeping variables $\mathbb{D}_{F}$ and $\mathbb{D}_{G}$, such that $|\mathbb{D}_{F}| = |\widehat{\mathcal{S}}^\mathcal{F}|$ and $|\mathbb{D}_{G}| = |\widehat{\mathcal{S}}^\mathcal{G}|$. To simplify notations, we will denote the bookkeeping variable corresponding to a state $s_i \in \widehat{\mathcal{S}}^\mathcal{F} \cup \widehat{\mathcal{S}}^\mathcal{G}$ as $d_i$. 
The next thing to note is that all occupancy frequency is calculated using the robot model.
The general idea here is that we turn the requirement of meeting reachability constraints that are part of $\widehat{\mathcal{S}}^\mathcal{F}$ and
$\widehat{\mathcal{S}}^\mathcal{G}$ into soft constraints that can be ignored at a cost. This is done through the manipulation of the bookkeeping variables. This brings us to the first theorem
\begin{theorem}
\label{theorem1}
    If there exists a policy that satisfies the requirement that no state in $\widehat{\mathcal{S}}^\mathcal{F}$ is visited and all states in $\widehat{\mathcal{S}}^\mathcal{G}$ is visited (therefore satisfied the underlying expectation), then the optimal solution for Equation \ref{equ:fourthlp}, must assign 0 to all variables in $\mathbb{D}_{G}$ and  $\mathbb{D}_{F}$.
            \vspace{-10pt}
\end{theorem}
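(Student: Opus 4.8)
The plan is to exploit the fact that the objective in Equation~\ref{equ:fourthlp} is simply $-\sum_{d \in \mathbb{D}_{G} \cup \mathbb{D}_{F}} d$ together with the nonnegativity constraints $d \geq 0$. Since every bookkeeping variable is constrained to be nonnegative, the objective value is bounded above by $0$, and it equals $0$ if and only if every $d$ is assigned $0$. Consequently, the theorem reduces to a single existence claim: it suffices to exhibit one feasible point of the LP whose bookkeeping variables are all $0$. If such a point exists, its objective value $0$ is the maximum attainable, so every optimal solution must also achieve objective value $0$; and because a sum of nonnegative terms vanishes only when each term vanishes, every optimal solution must set all variables in $\mathbb{D}_{G}$ and $\mathbb{D}_{F}$ to $0$, which is exactly what is asserted.

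To construct such a feasible point, I would start from the hypothesized policy $\pi$ that avoids every state in $\widehat{\mathcal{S}}^\mathcal{F}$ and visits every state in $\widehat{\mathcal{S}}^\mathcal{G}$ in the robot domain $\mathcal{D}^R$. First I would set the occupancy-frequency variables $x$ equal to the discounted occupancy frequency $x^\pi$ induced by $\pi$ in $\mathcal{D}^R$. By the standard correspondence between policies and occupancy frequencies recalled in Section~\ref{sec:backg} (the flow constraints of Equation~\ref{equ:origlp}, here written with $T^R$), $x^\pi$ satisfies the Bellman flow constraints of Equation~\ref{equ:fourthlp} exactly. I would then assign $d_i = 0$ for each forbidden state $s_i \in \widehat{\mathcal{S}}^\mathcal{F}$ and $d_j = 0$ for each goal state $s_j \in \widehat{\mathcal{S}}^\mathcal{G}$, and verify the remaining constraints. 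For the forbidden states, the constraint $\sum_a x(s_i,a) - d_i = 0$ becomes $x^\pi(s_i) = 0$, which holds precisely because $\pi$ never visits $s_i$. For the goal states, the constraint is met because $x^\pi(s_j)$ is nonnegative (and, under the intended strictly positive reachability threshold, strictly positive, as guaranteed by $\pi$ visiting $s_j$), so no slack is required and $d_j = 0$ is admissible.

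The main obstacle I anticipate is the careful bookkeeping at the two classes of constraints rather than any deep argument. The forbidden-state direction is immediate once feasibility of $x^\pi$ is established, since the equality constraint pins $d_i$ to the (zero) occupancy of $s_i$. The goal-state direction is where I would be most careful: the slack variable $d_j$ must be shown to be safely set to $0$, which rests on $\pi$ actually realizing the required visitation of $s_j$ in $\mathcal{D}^R$, and this is the only place the visitation half of the hypothesis is used. Once the constructed point is confirmed feasible with objective value $0$, optimality together with nonnegativity of the $d$ variables closes the argument, so no further optimization-theoretic machinery (e.g., LP duality) is needed.
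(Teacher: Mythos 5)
Your proposal is correct and follows essentially the same route as the paper's proof sketch: the objective $-\sum_{d \in \mathbb{D}_{G} \cup \mathbb{D}_{F}} d$ with $d \geq 0$ is bounded above by $0$, and the hypothesized policy's occupancy frequency $x^\pi$ in $\mathcal{D}^R$ together with all bookkeeping variables set to $0$ gives a feasible point achieving that bound, so every optimal solution must zero out $\mathbb{D}_{G}$ and $\mathbb{D}_{F}$. Your write-up is in fact more explicit than the paper's sketch (which argues only informally that the LP ``would want'' to minimize positive $d$ values without constructing the feasible point), and you correctly flag that the goal-state constraint $\sum_a x(s_j,a) + d_j \geq 0$ in Equation \ref{equ:fourthlp} is vacuous as printed and must be read with a strictly positive visitation threshold for the $\mathbb{D}_{G}$ half of the claim to have content.
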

\begin{proof}[Proof Sketch]
    The reasoning follows directly from how the last two constraints of Equation \ref{equ:fourthlp} and the fact setting more $d$ variables to zero will result in a higher objective value. Thus, the LP would want to minimize assigning positive values to $d$ variables. It will only give $d$ a positive value if a forbidden state has a non-zero occupancy frequency or if some potential goal states are unreachable.
                \vspace{-5pt}
\end{proof}

When such solutions are not found, the $d$ variables that take positive values tell us potential states we can query. 
After the query if the user says a state shouldn't be visited or should be visited the corresponding states are removed from $\widehat{\mathcal{S}}^\mathcal{F}$ or $\widehat{\mathcal{S}}^\mathcal{G}$ and moved to the set $\mathcal{S}^\mathcal{F}_*$ (set of known forbidden states) and  $\mathcal{S}^\mathcal{G}_*$ (set of known goal states). 
Once these sets become non-empty, we use an updated form of Equation \ref{equ:fourthlp}, which now has hard constraints of the form.
\begin{equation}
\label{equ:fifthLP}
    \begin{split}
        \forall s_i \in\mathcal{S}^\mathcal{F}_*, \sum_a x(s_i,a) = 0; ~
        \forall s_i \in \mathcal{S}^\mathcal{G}_*, \sum_a x(s_i,a)\geq 0
    \end{split}
\end{equation}
Theorem \ref{theorem1} also holds for this new LP. 
If the user answers that a state is neither a forbidden nor a goal state, then the state is simply removed from the corresponding superset.
This process is repeated until a policy is found, if there are no more elements in $\widehat{\mathcal{S}}^\mathcal{F} \cup \widehat{\mathcal{S}}^\mathcal{G}$, or if the LP is unsolvable.

Algorithm \ref{algo:overall} provides the pseudo-code for the query procedure.
The procedure $CreateLP$ creates an LP of the form described in Equation \ref{equ:fourthlp}, for the current estimates for $\widehat{\mathcal{S}}^\mathcal{F}$ and $\widehat{\mathcal{S}}^\mathcal{G}$ (along with known ones). 
The estimates are refined through queries with the user (represented by the procedures $QueryGoal$ and $QueryForbidden$). The algorithm stops if a solution is found that doesn't require any additional queries, if the LP is unsolvable, or if it runs out of states to query.
\begin{prop}
   Algorithm \ref{algo:overall}, is  guaranteed to exit in finite steps for all finite state space MDPs.
   \vspace{-5pt}
\end{prop}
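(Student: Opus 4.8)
The plan is to establish termination by exhibiting a monotonically decreasing, non-negative, integer-valued measure on the state of the WHILE loop --- namely the cardinality $|\widehat{\mathcal{S}}^\mathcal{G} \cup \widehat{\mathcal{S}}^\mathcal{F}|$ of the combined superset of candidate goal and forbidden states. First I would observe that $\widehat{\mathcal{S}}^\mathcal{G}$ and $\widehat{\mathcal{S}}^\mathcal{F}$ are, by construction, subsets of the state space $S$; since we restrict to finite-state MDPs, $|\widehat{\mathcal{S}}^\mathcal{G} \cup \widehat{\mathcal{S}}^\mathcal{F}| \leq |S| < \infty$ at initialization. This supplies a finite upper bound on the measure, which is also bounded below by zero, so it takes values in a well-founded set.

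Next I would argue that each pass through the loop body either returns (halting the algorithm) or strictly decreases this measure. There are exactly two early exits within the body: the LP of Equation \ref{equ:fourthlp} is unsolvable (return ``No Solution''), or the set of positive bookkeeping variables $\mathbb{D}_{G}^+ \cup \mathbb{D}_{F}^+$ is empty (return the policy corresponding to $X$). If neither fires, then $|\mathbb{D}_{G}^+ \cup \mathbb{D}_{F}^+| > 0$, and by the one-to-one correspondence between each variable $d_i$ and a state $s_i \in \widehat{\mathcal{S}}^\mathcal{F} \cup \widehat{\mathcal{S}}^\mathcal{G}$, the mapped set $\mathbb{S}'_\mathcal{G} \cup \mathbb{S}'_\mathcal{F}$ is non-empty. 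The subsequent assignment removes these states from $\widehat{\mathcal{S}}^\mathcal{G}$ and $\widehat{\mathcal{S}}^\mathcal{F}$ and never reinserts them: a queried state is either migrated into the known sets $\mathcal{S}^\mathcal{G}_*, \mathcal{S}^\mathcal{F}_*$ or dropped outright. Hence the measure falls by at least one on every non-terminating iteration.

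Combining these facts, after at most the initial value of the measure (bounded by $|S|$) iterations, either one of the return statements inside the body triggers, or the loop guard $|\widehat{\mathcal{S}}^\mathcal{G} \cup \widehat{\mathcal{S}}^\mathcal{F}| \neq 0$ becomes false and control reaches the terminal ``No solution'' return. In all cases the algorithm halts after finitely many iterations. Since each iteration performs only finitely many operations --- most notably solving a linear program over finitely many variables and constraints, which terminates in finite time --- the total execution is finite, establishing the claim.

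The step I expect to require the most care is the strict-decrease argument: one must confirm that every non-terminating iteration removes at least one state, which rests on the two implications ``an empty $\mathbb{D}_{G}^+ \cup \mathbb{D}_{F}^+$ forces a return'' and ``$\mathbb{S}'_\mathcal{G} \cup \mathbb{S}'_\mathcal{F} \neq \emptyset$ whenever $\mathbb{D}_{G}^+ \cup \mathbb{D}_{F}^+ \neq \emptyset$,'' together with the observation that states are never reintroduced into the supersets. Beyond this bookkeeping there is no deep obstacle; the result is a standard loop-variant termination proof.
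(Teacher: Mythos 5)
Your proof is correct and takes essentially the same approach as the paper, whose entire justification is the one-line observation that in the worst case the algorithm queries every state of the finite MDP; your loop-variant argument on $|\widehat{\mathcal{S}}^\mathcal{G} \cup \widehat{\mathcal{S}}^\mathcal{F}|$ is simply a careful formalization of that bound. In particular, your key steps---each non-returning iteration removes at least one state from the supersets via the correspondence between positive $d_i$ variables and states, states are never reinserted, and each LP solve is finite---are exactly what is needed and are all valid, so the loop executes at most $|S|$ times.
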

The above proposition holds since, in the worst case, the algorithm would query about every state in the MDP. Appendix \ref{app:noisy} provides an extension of the method for a planning function based on a noisy rational model.





\section{Related Works}
\label{sec:rel}
Given the ever-increasing capabilities of AI agents, there has been a lot of work that has looked at how to handle partially specified or misspecified objectives. 
Most work in this area tends to be motivated by the need to ensure safety.
This is fueled by the recognition that people tend to be bad at correctly specifying their objectives \citep{serina,smitamilli}, and optimizing for incorrectly specified objectives could have disastrous results \citep{cirl}. One could roughly categorize the work done to develop more general methods to address this problem into two broad categories. 

In the first, the reward function is mostly taken to be true, but they assume there is a set of hard constraints, usually partially specified, on what state features could be changed \citep{weld}, which are usually referred to as side-effects. 
As such, these works are usually framed in terms of avoiding negative side effects.
Many works in this direction focus on the problem of how to identify potential side-effect features (cf. \citep{minimax,sandhyaearly,shlomo}), mostly by querying the users.
Recent work has also looked at extending these methods to non-markovian side-effects \citep{nnmrv} and even to possible epistemic side effects \citep{klassen2023epistemic}.
In addition to querying, there are also works that look at minimizing potential side effects when coming up with behavior \citep{klassen2022planning}.
Previous work includes work in the context of factored planning problems (cf. \citep{PlanningtoASE,ourpaper}) and reinforcement learning \citep{mulobj}.

The second category treats the reward function provided by the user as merely constituting a partial specification or an observation of the true reward function held by the user.
A canonical example is inverse reward design \citep{ird}, which uses the specified reward function as a proxy to infer the true objective. For a good balance between conservatism and informativeness, \citet{CRinf} proposes an algorithm to infer the reward function from two conflicting reward function sources. \citet{ijcai2023p53} use explanations within their framework to verify and improve reward alignment. \citet{effR} proposes anomaly detection to tackle reward hacking due to misspecification. This is also related to the CIRL framework \citep{cirl}, which eschews an agent from having its own reward function, but tries to maximize the human reward function. However, the agent estimate of this reward function may also be incorrect or incomplete. 

Our expectation-alignment framework allows us to capture the requirements of both of these sets of works. We can set negative side effects or safety constraints by setting the frequency of relevant states to zero. Our framework explicitly captures the fact that the specified reward should not be directly optimized in the true model. 
In scenarios where the reward function induces the same occupancy frequency in both human and robot models, recovering the human reward function suffices to generate expectation-aligned policies. But, as discussed this is not a general solution strategy.

Another stream of work that is relevant to the current problem is that of imitation learning \citep{torabi2019recent} or learning from demonstration \citep{argall2009survey}. 
Under this regime, the teacher demonstrates a particular course of action to an agent, and it is expected to either imitate or generate behaviors that match the intent behind the demonstration.
Apprenticeship learning \citep{10.1145/1015330.1015430}, and other methods that use inverse reinforcement learning (cf. \citep{arora2021survey}) treat the reward function as an unknown entity and use the demonstration and any knowledge about the demonstrator to identify the true reward function.
The objective then becomes to identify a policy that maximizes this learned reward function.
In some sense, our work inverts this paradigm and starts from a specified reward function and tries to recreate what expectations may have led to this reward function.
In addition to all the work on partially specified rewards, our work is also connected to preference/model elicitation (cf. \citep{boutilier2002pomdp} and \citep{chen2004survey}). Our proposed method can be thought of as a form of preference elicitation, except we focus on learning some specific kinds of preferences. It is also worth noting that there have been some efforts within the inverse-reinforcement learning community to formalize misspecification (cf. \citep{skalse2023misspecification}). However, we see such work being complementary to our effort.
\section{Evaluation}
Our primary goal with the empirical evaluation was to compare our proposed method against two baseline methods selected from the two groups of works described in Section \ref{sec:rel}. Specifically, we wanted to test:\\

\vspace{-7pt}
    {\em "How the method described in Section \ref{sec:inst} compared with existing methods in terms of (a) computational efficiency (measured in terms of time-taken), (b) overhead placed on the user (number of queries) and (c) ability to satisfy user-expectations."}\\
    \vspace{-5pt}
    
In particular, we selected Minimax-Regret Querying (MMRQ-k) \citep{minimax} (with query size $k=2$ and all features set to unknown) and a modified form of the Inverse Reward Design method \citep{ird}. 
To simplify the setting, we considered an MDP formulation where rewards were associated with just states as opposed to states and actions.
For the modified IRD, we avoided the expensive step of calculating posterior distribution over the reward function set and instead directly used $\widehat{S}^{G}$ and $\widehat{S}^{F}$. For the expected reward function, a high positive reward was assigned for  $\widehat{S}^{G}$ and a negative one for $\widehat{S}^{F}$.
We then try to solve the MDP for this reward function using an LP planner. For the query-based methods, we simply check with the ground truth on whether a state belongs to the forbidden state set or to the goal state set.\\

\noindent\textbf{Test Domains.} We tested our method and baseline on five domains. Most of these are standard benchmark tasks taken from the SimpleRL library \citep{abel2019simplerl}. Since all examples were variations of a basic grid-world domain, we considered four different sizes and five random instantiations of each grid size (obtained by randomizing the initial state, goal state, forbidden states, and location of objects).
For each of the tasks, the expectation set consists of reaching the goal state and avoiding some random states in the environment. The human models were generated by modifying the original task slightly.
The walkway involves a simple grid world where the robot can use a movable walkway to reach certain states easily, but the human is unaware of it. Obstacles involve the robot navigating to a goal while avoiding obstacles (the human model includes incorrect information about the obstacles). Four rooms \citep{sutton1999between} involves the robot navigating through connected rooms, but in the human model, the use of certain doors may not be allowed. In Puddle, the robot needs to travel to the goal while avoiding certain terrain types, while the human model may be wrong about the location of various terrain elements. Finally, in maze, the robot needs to navigate a maze, while the human model may have incorrect information about what paths are available.\\

\noindent\textbf{Evaluation.} All the baselines were run with a time-bound of 30 minutes per problem. All experiments were run on AlmaLinux 8.9 with 32GB RAM and 16 Intel(R) Xeon(R) 2.60GHz CPUs. We used CPLEX \citep{bliek1u2014solving} as our LP solver (no-cost edition)\footnote{The code for our experiments can be found at \url{https://github.com/Malek-Mechergui/codeMDP}}.
 First, {\em we found that the MMRQ-k method could not solve any of the instances}. This was because the runtime was dominated by the time needed to calculate the exponential number of dominant policies. This shows how computationally expensive methods from the first group are.  On the other hand, our method is much faster, and even in the largest grids, it took less than five minutes (Table \ref{tab1}) and only required very few queries. Note that the maximum number of queries that could be raised in each case corresponds to the total state space.
 In each case considered here the number of queries raised was substantially smaller than the state space size.
 Thus showing the effectiveness of our method compared to existing query methods.

In terms of comparing our method to IRD, the main point of comparison would be a number of violated expectations (after all IRD doesn't support querying). Table \ref{tab1} shows how, in almost all the cases, IRD generated policies that resulted in expectation violations. On the other hand, our method guarantees policies that will never result in violation of user policies.
While IRD is fast, please keep in mind that we avoided the expensive inference process of the original IRD with direct access to $\mathcal{S}^\mathcal{F}$ and $\mathcal{S}^\mathcal{G}$ (reported times don't include the calculation of these sets).

\begin{table*}[!t]
\small
\centering
  \begin{tabular}{|r|r|c|c|c|c|r|r}
    \toprule
    \multicolumn{2}{|c|}{Problem Instance}
    & \multicolumn{2}{|c|}{Our method}
    & \multicolumn{2}{|c|} {IRD}\\
     Domain & Grid size 
     &  Query Count & Time (secs)
    & No of Violated Expectations & Time (msecs)\\
          \midrule
  
   \centering
      {Walkway} &(4,4)& 2.2 $\pm$ 0.83 & 7.7 $\pm$ 0.4&  1.6 $\pm$ 0.5 & 32.3 $\pm$ 1.4 \\ 
    &(5,5)& 3.2 $\pm$ 1.48 & 11.8 $\pm$ 0.45 & 2.26 $\pm$ 1.03 & 45 $\pm$ 1.12  \\ 
    &(9,9)& 4.5 $\pm$ 4.8 & 60 $\pm$ 1 & 2.5 $\pm$ 3 & 215.7 $\pm$ 3 \\ 
     &(11,11)& 14.25 $\pm$ 2.91 & 138.57 $\pm$ 6.32 & 3.6 $\pm$ 3.9 & 456.05 $\pm$ 19  \\ 
     \bottomrule
   {Obstacles} 
    &(4,4)& 2.4 $\pm$ 1.34 & 9.6 & 1.4 $\pm$ 0.58 & 32.4 $\pm$ 1.27  \\ 
    &(5,5)& 3 $\pm$ 1 & 13 $\pm$ 0.3 &  2 $\pm$ 1.12 & 46.13 $\pm$ 2.42 \\ 
    &(9,9)& 4.5 $\pm$ 3.1 & 62.8 $\pm$ 0.9&  4 $\pm$ 2.66 & 216.66 $\pm$ 12.4  \\ 
    &(11,11)& 11.75 $\pm$ 2.87 & 134.51 $\pm$ 6.88&  6 $\pm$ 3.7 & 450 $\pm$ 26.6  \\ 
    \bottomrule
{Four Rooms} 
    &(5,5)& 1.4 $\pm$ 0.55 & 3.1 $\pm$ 1 & 1.42 $\pm$ 0.75 & 49 $\pm$ 1.7  \\ 
    &(7,7)& 1.8 $\pm$ 0.45 & 12.9  $\pm$ 0.7 &  1.07 $\pm$ 0.73 & 111.5 $\pm$ 3 \\ 
    &(9,9)& 2.75 $\pm$ 0.5& 71.2 $\pm$ 0.8 &  1.35 $\pm$ 0.74 & 251 $\pm$ 3.5 \\ 
    &(12,12)& 4.44 $\pm$ 2.065 & 224.98 $\pm$ 6.84 & 1 $\pm$ 0.57 & 728 $\pm$ 5.22 \\
    \bottomrule
{Puddle} 
    &(5,5)& 3 $\pm$ 2& 13.2 &  1.13$\pm$ 0.63 & 49.5 $\pm$ 1.48  \\
    &(7,7)& 5 $\pm$ 3.46 & 31.7 $\pm$ 7 & 1.2 $\pm$ 0.63 & 123.27 $\pm$ 15\\ 
    &(9,9)& 3.14 $\pm$ 2.6 & 42.11 $\pm$ 1.86 & 0.9 $\pm$ 0.66 & 275.2 $\pm$ 5.9 \\ 
    &(11,11)& 2.44 $\pm$ 1.85 & 132.5 $\pm$ 7.132&  1.3 $\pm$ 0.6 & 566.1 $\pm$ 7.8 \\ 
    \bottomrule
 {Maze} 
 &(3,3)& 1.66 $\pm$ 0.57& 5.9 $\pm$ 0.1 &  1 $\pm$ 0.87 & 25.1 $\pm$ 1.51  \\ 
    &(5,5)& 1.66$\pm$ 0.57 & 13 $\pm$ 1&  1.36 $\pm$ 0.67& 47.3 $\pm$ 1.3  \\ 
    &(7,7)& 2.33 $\pm$ 0.57 & 30 $\pm$ 0.5& 1.46 $\pm$ 0.66& 106 $\pm$ 1.6 \\ 
    & (9,9)& 7.5 $\pm$ 6.27 & 35.34 $\pm$ 14.18& 1.42 $\pm$ 0.51 & 244.29 $\pm$ 14 \\ 
    \bottomrule
  \end{tabular}
\caption{For our method, the table reports the number of queries raised and the time taken by our method. For IRD, it shows the number of expectations violated by the generated policy and the time taken. Note that our method is guaranteed not to choose a policy that results in violated expectations.}
\label{tab1}
\vspace{-10pt}
\end{table*}

\section{Conclusion}
This paper introduces a novel paradigm for studying and developing approaches to handle reward misspecification problems. The {\em expectation-alignment} framework takes the explicit stand that any reward function a user provides is their attempt to drive the agent to generate behaviors that meet some underlying expectations. We formalize this intuition to explicitly define reward misspecification, and we then use this framework to develop an algorithm to generate behavior in the presence of reward misspecification. We empirically demonstrate how our method provides a significant advantage over similar methods to handle this reward misspecification in standard MDP planning benchmarks.\\
\noindent\textbf{Limitations.}
One aspect of reward misspecification we haven't discussed here is the one caused by human errors during transcription or communication of the reward function (say, bugs in the reward function code). 
As previous works have shown \citep{serina}, reward misspecification is quite frequent in practice, even in the absence of such errors.
This paper also focuses on settings where the models and planning functions are known upfront. We believe our method sets up a solid formal framework that can be used as a basis to develop future RL methods that could relax these assumptions. 
The paper also only instantiates a specific form of expectation set. More work needs to be done to identify when different forms of expectation may be appropriate.
Not only could different forms of expectation set be more intuitive or natural for different settings, but it might also impact how effectively the user can be queried.

\begin{ack}
Sarath Sreedharan's research is supported in part by grant NSF 2303019. This material is based in part upon work supported by Other Transaction award HR00112490377 from the U.S. Defense Advanced Research Projects Agency (DARPA) Friction for Accountability in Conversational Transactions (FACT) program. Approved for public release, distribution unlimited.
\end{ack}

\bibliography{bib}
\clearpage
\section{Appendix / supplemental material}
\subsection{Proof Sketch for Propositions}
\label{app:proof}
\begin{prp}
     There exists no state $s \in \mathcal{S}^\mathcal{F}$ and policy $\pi  \in \Pi^*_{\mathcal{M}^H}$, such that $x^\pi(s) > 0$ is true.
\end{prp}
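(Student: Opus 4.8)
The plan is to give a short proof by contradiction that simply threads together the definitions of the forbidden set, human-sufficiency, and expectation satisfaction. The crucial structural fact is that in the instantiation of Section \ref{sec:inst} the planning function $\mathcal{P}$ returns exactly the set of optimal policies, so that $\mathcal{P}^H(\langle \mathcal{D}^H, \mathcal{R}^H\rangle) = \Pi^*_{\mathcal{M}^H}$, together with the standing assumption that the user-supplied reward $\mathcal{R}^H$ is \emph{human-sufficient} (this is what it means for the human to have offered $\mathcal{R}^H$ as a reward they believe realizes $\mathbb{E}^H$).

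First I would unpack what membership in $\mathcal{S}^\mathcal{F}$ buys us: by the way $\mathcal{S}^\mathcal{F}$ was defined, every $s \in \mathcal{S}^\mathcal{F}$ arises from an expectation element $e = \langle \{s\}, =, 0\rangle \in \mathbb{E}^H$. Then I would suppose, toward a contradiction, that there is some $s \in \mathcal{S}^\mathcal{F}$ and some optimal policy $\pi \in \Pi^*_{\mathcal{M}^H}$ with $x^\pi(s) > 0$ (the occupancy frequency evaluated in $\mathcal{D}^H$, matching the subscript on $\models_{\mathcal{D}^H}$).

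Next I would invoke human-sufficiency: since $\mathcal{R}^H$ is human-sufficient and $\pi \in \Pi^*_{\mathcal{M}^H} = \mathcal{P}^H(\langle \mathcal{D}^H, \mathcal{R}^H\rangle)$, the definition forces $e \models_{\mathcal{D}^H} \pi$ for every $e \in \mathbb{E}^H$, and in particular for the element $e = \langle \{s\}, =, 0\rangle$ identified above. Unfolding the definition of satisfaction for this singleton element gives $\sum_{s' \in \{s\}} x^\pi(s') = x^\pi(s) = 0$, directly contradicting $x^\pi(s) > 0$. Since the assumption was arbitrary, no such pair $(s,\pi)$ can exist, which is the claim.

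There is no real obstacle here; the statement is essentially a restatement of human-sufficiency specialized to forbidden-state elements. The only point requiring a little care is being explicit that the proposition is about the \emph{human} model: the quantification runs over $\Pi^*_{\mathcal{M}^H}$ and the frequencies are read off in $\mathcal{D}^H$, so the result is a property the robot can actually exploit (it establishes $\mathcal{S}^\mathcal{F} \subseteq \widehat{\mathcal{S}}^\mathcal{F}$, the set of states unreachable under any human-optimal policy) rather than a claim about the robot domain $\mathcal{D}^R$.
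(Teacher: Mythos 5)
Your proposal is correct and follows essentially the same route as the paper's own proof: a contradiction argument that identifies the expectation element $\langle \{s\}, =, 0\rangle$ for $s \in \mathcal{S}^\mathcal{F}$, uses the assumption that $\mathcal{P}^H$ returns the optimal policy set so that $\pi \in \mathcal{P}^H(\langle \mathcal{D}^H, \mathcal{R}^H\rangle)$, and invokes human-sufficiency to force $x^\pi(s) = 0$ in $\mathcal{D}^H$. Your added remark that everything is evaluated in the human model (establishing $\mathcal{S}^\mathcal{F} \subseteq \widehat{\mathcal{S}}^\mathcal{F}$) is a helpful clarification but does not change the argument.
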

\begin{proof}[Proof Sketch]
    We will prove this through contradiction. Let's assume that there exists a state $s \in \mathcal{S}^\mathcal{F}$, where $x^\pi(s) > 0$ for an optimal policy $\pi \in \Pi^*_{\mathcal{M}^H}$ for a human specified reward function $\mathcal{R}^H$. Per Definition 4, a reward function is only specified, i.e., they are human-sufficient if, for every policy in $\pi \in \mathcal{P}^H(\langle \mathcal{D}^H, \mathcal{R}\rangle)$, you have $e \models_{\mathcal{D}^H} \pi$, for all $e \in \mathbb{E}^H$. If $s \in \mathcal{S}^\mathcal{F}$, then $e = \langle \{s\},=,0\rangle$. Per our assumptions, $\mathcal{P}$ returns the set of optimal policies, hence $\pi \in \mathcal{P}^H(\langle \mathcal{D}^H, \mathcal{R}^H\rangle)$. This means that $e \models_{\mathcal{D}^H} \pi$, which is only true if $x^\pi(s) = 0$. This contradicts our initial assertion, hence proving our statement by contradiction.
\end{proof}
\begin{prp}
 For every state $s \in \mathcal{S}^\mathcal{G}$ and policy $\pi  \in \Pi^*_{\mathcal{M}^H}$,  $x^\pi(s) > 0$ must always be true.
         \vspace{-5pt}
\end{prp}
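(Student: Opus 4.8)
The plan is to mirror, almost line for line, the contradiction argument used for the preceding proposition, exploiting the symmetry between the forbidden-state case (operator $=$ with $k=0$) and the goal-state case (operator $>$ with $k=0$). First I would assume the negation of the claim: suppose there exist a state $s \in \mathcal{S}^\mathcal{G}$ and an optimal policy $\pi \in \Pi^*_{\mathcal{M}^H}$ for the human-specified reward $\mathcal{R}^H$ such that $x^\pi(s) = 0$, i.e., the assertion $x^\pi(s) > 0$ fails for at least one optimal policy.

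Next I would unfold the definition of the goal set. Membership $s \in \mathcal{S}^\mathcal{G}$ means that the human expectation set $\mathbb{E}^H$ contains an element of the form $e = \langle \{s\}, >, 0 \rangle$. Since we are operating under the instantiation of this section, where the planning function $\mathcal{P}$ returns exactly the set of optimal policies, the policy $\pi$ necessarily lies in $\mathcal{P}^H(\langle \mathcal{D}^H, \mathcal{R}^H \rangle)$. Because $\mathcal{R}^H$ is a human-sufficient reward function, the definition of human-sufficiency then forces $e \models_{\mathcal{D}^H} \pi$ for every $e \in \mathbb{E}^H$, and in particular for the goal element above.

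Finally, I would unpack the satisfaction relation for this element: $e \models_{\mathcal{D}^H} \pi$ holds precisely when $\sum_{s' \in \{s\}} x^\pi(s') > 0$, that is, when $x^\pi(s) > 0$. This directly contradicts the assumption $x^\pi(s) = 0$, which completes the proof by contradiction.

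The step carrying the real content is establishing that $\pi \in \mathcal{P}^H(\langle \mathcal{D}^H, \mathcal{R}^H \rangle)$, since that is what lets human-sufficiency bite; this relies entirely on the standing assumption specific to this section that the human's planning function returns the optimal policy set. I do not anticipate a genuine obstacle beyond this: the only bookkeeping difference from the forbidden-state proposition is the direction of the relational operator, which here is the reachability (strict lower-bound) direction rather than the avoidance (equality-to-zero) direction, but the logical shape of the argument is identical.
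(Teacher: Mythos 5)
Your proposal is correct and matches the paper's own argument, which likewise proves the statement by contradiction using the human-sufficiency of $\mathcal{R}^H$ together with the section's assumption that $\mathcal{P}^H$ returns exactly the optimal policy set; in fact, your write-up spells out the details that the paper's sketch only gestures at by saying it follows the rationale of the preceding proposition. The only implicit step worth noting is that negating $x^\pi(s) > 0$ yields $x^\pi(s) = 0$ because occupancy frequencies are nonnegative, which you handle correctly.
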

\begin{proof}[Proof Sketch]
    We can prove this through contradiction by following a rationale similar to the earlier proposition. We will again leverage the intuition that for a reward function to be human-sufficient, all optimal policies must satisfy all specified expectations, which include visiting states in $\mathcal{S}^\mathcal{G}$. As such, if a state is not visited by an optimal policy, it cannot be part of the set $\mathcal{S}^\mathcal{G}$.
\end{proof}
\begin{prp}
    For the LP described in Equation \ref{equ:secondlp}, if $s_i \in \mathcal{S}^\mathcal{F}$ then for the optimal value $x^*$ identified for the LP, the condition $\sum_{a} x^*(s_i, a) = 0$ must hold. 
    \vspace{-5pt}
\end{prp}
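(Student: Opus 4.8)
The plan is to show that the two constraints of the LP in Equation~\ref{equ:secondlp} force every feasible $x$ to be the occupancy frequency of an optimal policy of the human model $\mathcal{M}^H$, and then to invoke Proposition~\ref{prop:sf}, which already guarantees that such policies never visit a forbidden state. Since the claim concerns the optimizer $x^*$, and $x^*$ is in particular feasible, the conclusion follows the moment the feasibility characterization is in place.

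First I would recall the standard correspondence between the occupancy-frequency polytope and (possibly stochastic) policies: any $x$ satisfying the flow constraint (the first constraint of Equation~\ref{equ:secondlp}, evaluated in $\mathcal{D}^H$) is the discounted occupancy frequency $x^\pi$ of the policy $\pi$ given by $\pi(a\mid s) = x(s,a)/\sum_{a'} x(s,a')$ on states with positive visitation (and arbitrary elsewhere), and conversely every policy induces such an $x$. I would then use the identity that the discounted return of $\pi$ from $s_0$ equals $\sum_{s,a} x(s,a)\, r(s,a)$, which is exactly the objective of the planning LP of Equation~\ref{equ:origlp}, whose optimum is by definition $V^*_{s_0}$.

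The key step is the second constraint $\sum_{s,a} x(s,a)\, r(s,a) = V^*_{s_0}$: it restricts the feasible region to the face of the occupancy polytope on which the return attains the optimal value, i.e.\ to occupancy frequencies of policies achieving $V^*_{s_0}$ from $s_0$. Hence every feasible $x$ corresponds to some $\pi \in \Pi^*_{\mathcal{M}^H}$ (if one insists on optimality at \emph{every} state, one modifies $\pi$ on the $s_0$-unreachable states, which carry zero occupancy and leave $x$ unchanged). I would also observe that along this face the first objective term is constant and equal to $V^*_{s_0}$, so the $\alpha\sum_a x(s_i,a)$ term merely selects, among optimal occupancy frequencies, one that maximizes the visitation of $s_i$; it neither enlarges the feasible set nor affects the claim.

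With this characterization the argument closes immediately: if $s_i \in \mathcal{S}^\mathcal{F}$, Proposition~\ref{prop:sf} gives $x^\pi(s_i) = \sum_a x^\pi(s_i,a) = 0$ for every $\pi \in \Pi^*_{\mathcal{M}^H}$, so every feasible $x$ of Equation~\ref{equ:secondlp} satisfies $\sum_a x(s_i,a) = 0$; in particular the optimizer $x^*$ does, which is the assertion. The main obstacle is the feasibility-characterization step, namely making the bijection between the optimal face of the occupancy polytope and $\Pi^*_{\mathcal{M}^H}$ precise, which means being careful about stochastic policies and about the gap between being optimal from $s_0$ and being optimal at every state. Everything after that is a direct appeal to Proposition~\ref{prop:sf}.
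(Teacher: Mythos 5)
Your proof is correct and takes essentially the same route as the paper's: the value-equality constraint $\sum_{s,a} x(s,a)\,r(s,a) = V^*_{s_0}$ restricts the feasible region to occupancy frequencies of optimal policies of $\mathcal{M}^H$, and Proposition~\ref{prop:sf} then forces $\sum_a x(s_i,a) = 0$ for every feasible point, hence for $x^*$. Your added observations --- that the $\alpha$-term is irrelevant to this direction of the claim (it only matters for avoiding false positives when constructing $\widehat{\mathcal{S}}^\mathcal{F}$, since it ensures the optimizer visits $s_i$ whenever some optimal policy does), and that one must handle stochastic policies on the optimal face and states unreachable from $s_0$ --- make the argument tighter than the paper's sketch, but the underlying idea is identical.
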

\begin{proof}[Proof Sketch]
In the equation the constraint $\sum_{s,a} x(s,a)\times r(s,a) = V^*_{s_0}$, ensures that the identified occupancy frequency corresponds to an optimal policy. The LP is trying to maximize the objective function. Since the first term of the objective function $ \max_{x} \sum_{s,a} x(s,a)\times r(s,a)$ corresponds to the value of the policy, it must be equal for all optimal policies. As such, the solver will try to identify occupancy frequencies that try to maximize the occupancy frequency for visiting $s_i$. In other words, it will try to find optimal policies that visit the state.
However, per Proposition \ref{prop:sf}, no states in $\mathcal{S}^\mathcal{F}$ are ever visited by an optimal policy. This means if the LP finds a solution with non-zero occupancy frequency for the state $s_i$ it cannot be part of $\mathcal{S}^\mathcal{F}$.
\end{proof}
\begin{prp}
    For the LP described in Equation \ref{equ:thirdlp}, if $s_i \in \mathcal{S}^\mathcal{G}$, then there must exist no solution for the given LP. 
                    \vspace{-5pt}
\end{prp}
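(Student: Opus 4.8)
The plan is a proof by contradiction that parallels the argument for the forbidden-state LP of Equation \ref{equ:secondlp}, but reasons about \emph{feasibility} rather than the value achieved. Assume $s_i \in \mathcal{S}^\mathcal{G}$ and suppose, toward a contradiction, that the LP of Equation \ref{equ:thirdlp} admits a feasible solution $x$.

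First I would establish that any feasible $x$ is the occupancy frequency of an optimal policy in the human model. The first block of constraints in Equation \ref{equ:thirdlp} is exactly the flow constraints of Equation \ref{equ:origlp} instantiated with $T^H$, so every feasible $x$ is a valid occupancy frequency and is therefore realized by some policy. The constraint pinning the discounted return to $V^*_{s_0}$ (obtained by solving Equation \ref{equ:origlp} on $\mathcal{M}^H$) then forces that policy to be optimal, i.e., $x$ lies on the optimal face of the occupancy polytope.

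Next I would invoke the earlier proposition stating that every $s \in \mathcal{S}^\mathcal{G}$ satisfies $x^\pi(s) > 0$ for all $\pi \in \Pi^*_{\mathcal{M}^H}$. Since the extreme points of the occupancy polytope correspond to deterministic stationary policies, the vertices of the optimal face correspond to deterministic optimal policies, each of which satisfies $\sum_a x(s_i,a) > 0$ by that proposition. Because $x$ lies on this face, it is a convex combination of those vertices, and a convex combination of occupancy frequencies that are each strictly positive at $s_i$ is itself strictly positive at $s_i$; hence $\sum_a x(s_i,a) > 0$. This directly contradicts the final constraint $\sum_a x(s_i,a) = 0$ of Equation \ref{equ:thirdlp}, so no feasible $x$ can exist.

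I expect the only delicate point to be the bridge in the previous paragraph: the cited goal-state proposition is phrased for policies $\pi \in \Pi^*_{\mathcal{M}^H}$, which the background treats as deterministic maps $\pi : S \to A$, whereas a generic feasible $x$ may be the occupancy of a stochastic (mixed) optimal policy. The convex-hull argument---that the extreme points of the optimal face are deterministic optimal policies and that strict positivity at $s_i$ is preserved under convex combination---is precisely what closes this gap, and it is the step I would write out most carefully.
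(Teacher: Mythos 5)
Your proof is correct and takes essentially the same route as the paper's own sketch: a feasible solution to the LP of Equation \ref{equ:thirdlp} would be the occupancy measure of an optimal policy with $\sum_a x(s_i,a)=0$, contradicting the earlier proposition that every state in $\mathcal{S}^\mathcal{G}$ has strictly positive occupancy under every optimal policy. Your additional convex-hull step---extending that proposition from deterministic optimal policies (the vertices of the optimal face of the occupancy polytope) to arbitrary feasible points, which may correspond to stochastic policies---carefully closes a gap that the paper's sketch passes over silently (the paper's appendix even cites the proposition being proved rather than the supporting one, evidently a labeling slip), so your version is if anything more complete.
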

\begin{proof}[Proof Sketch]
The proof closely resembles the one provided for the earlier proposition. The constraint $\sum_{s,a} x(s,a)\times r(s,a) = V^*_{s_0}$ forces the LP solver to only consider occupancy frequencies for optimal policies. The new constraint $\sum_{a} x(s_i,a) = 0$, forces the LP to find optimal policies where the state is not visited. If in fact such a policy is found, per Proposition \ref{prop:sg}, the state cannot be part of $\mathcal{S}^\mathcal{G}$.
\end{proof}
\subsection{Noisy Rational Model}
\label{app:noisy}
It is well known that humans are better approximated as bounded rational agent, as opposed to rational agents. As such, except in the simplest scenarios, human users may not be able to identify the optimal decision. A popular model for approximating human decision-making is to use the noisy rational model \cite{noisyrat}. Under this approach, the likelihood of the human choosing a policy $\pi$ is given as
\[p(\pi) \propto e^{\beta\times V^{\pi}(s_0)}\]
Where $V^{\pi}(s_0)$ corresponds to the value associated with state $s_0$ under a policy $\pi$.
For any non-zero $\beta$ value, the user is more likely to select policies with higher values. However, even non-optimal policies have a probability of getting picked. Additionally, as the value of $\beta$ reduces, the likelihood of a non-optimal policy being selected increases, with $\beta=0$ corresponding to a case where the user selects the policy completely randomly.
As such, $\beta$ is sometimes referred to as the rationality parameter. 

For a non-zero $\beta$ value, let $V^{p_i}_{S_0}$, be the value for which the noisy rational model assigns a probability of at least $p_i$ of being picked. Now, let us consider a case where we want to extend our formulation to support a planning function where the human could pick some policy with a probability greater than $p_i$. In this case, we only need to update the LP to calculate the supersets. However, here $\widehat{\mathcal{S}}^\mathcal{F}$ consists of all states avoided by at least one policy and $\widehat{\mathcal{S}}^\mathcal{G}$
consists of every state visited by at least one policy. The corresponding LP formulation for $\widehat{\mathcal{S}}^\mathcal{F}$ become
\begin{align}
\label{equ:thirdlpnoisy}
        \begin{split}
        \max_{x} \sum_{s,a} x(s,a)\times r(s,a)\\[-5pt]
        \text{s.t.} \quad \forall s\in S, \sum_{a} x(s,a) = \delta(s,s_0) +\gamma\times \sum_{s',a'} x(s',a') \times T^H(s', a', s)\\[-5pt]
        \sum_{a} x(s_0,a)\times r(s,a) > V^{p_i}_{S_0}, ~~~~
        \sum_{a} x(s_i,a) = 0
    \end{split}
    \end{align}
A state belongs to $\widehat{\mathcal{S}}^\mathcal{F}$ if the above LP is solvable for a given state.
 
\begin{equation}
\label{equ:secondlpnoisy}
    \begin{split}
        \max_{x} \sum_{s,a} x(s,a)\times r(s,a) + \alpha \times (\sum_{a} x(s_i,a))\\[-5pt]
        \text{s.t.} \quad \forall s\in S, \sum_{a} x(s,a) = \delta(s,s_0) +
        \gamma\times \sum_{s',a'} x(s',a') \times T^H(s', a', s)\\[-5pt]
        \sum_{a} x(s_0,a)\times r(s,a) > V^{p_i}_{S_0}\\[-8pt]
    \end{split}
\end{equation}

A state belongs to $\widehat{\mathcal{S}}^\mathcal{G}$ if the above LP is solvable for a given state.

As you can see, basically for this new planning function, we only need to swap the LP formulations, change the cost requirement to a looser one, and then check for solvability instead of unsolvability. Since we still get supersets, we can still use the pseudocode provided earlier.


\section{Broader Impact}
We see the problems and challenges being discussed here as being core to the aim of developing effective AI systems that can interact and work with people. We believe that as AI systems become more powerful, it is important that we have measures to detect possible misspecifications of rewards and objectives. The problems discussed here are also related to the problem of value alignment. However, our goal is only to develop methods to ensure AI system behavior aligns with user expectations/intentions. We are not in any way assigning whether or not user behaviors align with larger societal values. We believe this is an orthogonal problem, which would still benefit from the methods we develop.
\newpage
\section*{NeurIPS Paper Checklist}

\begin{enumerate}

\item {\bf Claims}
    \item[] Question: Do the main claims made in the abstract and introduction accurately reflect the paper's contributions and scope?
    \item[] Answer: \answerYes{} 
    \item[] Justification: We have a bulleted list in the introduction showing our contributions.
    \item[] Guidelines:
    \begin{itemize}
        \item The answer NA means that the abstract and introduction do not include the claims made in the paper.
        \item The abstract and/or introduction should clearly state the claims made, including the contributions made in the paper and important assumptions and limitations. A No or NA answer to this question will not be perceived well by the reviewers. 
        \item The claims made should match theoretical and experimental results, and reflect how much the results can be expected to generalize to other settings. 
        \item It is fine to include aspirational goals as motivation as long as it is clear that these goals are not attained by the paper. 
    \end{itemize}

\item {\bf Limitations}
    \item[] Question: Does the paper discuss the limitations of the work performed by the authors?
    \item[] Answer: \answerYes{}
    \item[] Justification: We have a limitations section under conclusion.
    \item[] Guidelines:
    \begin{itemize}
        \item The answer NA means that the paper has no limitation while the answer No means that the paper has limitations, but those are not discussed in the paper. 
        \item The authors are encouraged to create a separate "Limitations" section in their paper.
        \item The paper should point out any strong assumptions and how robust the results are to violations of these assumptions (e.g., independence assumptions, noiseless settings, model well-specification, asymptotic approximations only holding locally). The authors should reflect on how these assumptions might be violated in practice and what the implications would be.
        \item The authors should reflect on the scope of the claims made, e.g., if the approach was only tested on a few datasets or with a few runs. In general, empirical results often depend on implicit assumptions, which should be articulated.
        \item The authors should reflect on the factors that influence the performance of the approach. For example, a facial recognition algorithm may perform poorly when image resolution is low or images are taken in low lighting. Or a speech-to-text system might not be used reliably to provide closed captions for online lectures because it fails to handle technical jargon.
        \item The authors should discuss the computational efficiency of the proposed algorithms and how they scale with dataset size.
        \item If applicable, the authors should discuss possible limitations of their approach to address problems of privacy and fairness.
        \item While the authors might fear that complete honesty about limitations might be used by reviewers as grounds for rejection, a worse outcome might be that reviewers discover limitations that aren't acknowledged in the paper. The authors should use their best judgment and recognize that individual actions in favor of transparency play an important role in developing norms that preserve the integrity of the community. Reviewers will be specifically instructed to not penalize honesty concerning limitations.
    \end{itemize}

\item {\bf Theory Assumptions and Proofs}
    \item[] Question: For each theoretical result, does the paper provide the full set of assumptions and a complete (and correct) proof?
    \item[] Answer: \answerYes{} 
    \item[] Justification: All assumptions made are clearly mentioned, and all equations, definitions, theorems and propositions are numbered. All theorems have a proof sketch provided
    \item[] Guidelines:
    \begin{itemize}
        \item The answer NA means that the paper does not include theoretical results. 
        \item All the theorems, formulas, and proofs in the paper should be numbered and cross-referenced.
        \item All assumptions should be clearly stated or referenced in the statement of any theorems.
        \item The proofs can either appear in the main paper or the supplemental material, but if they appear in the supplemental material, the authors are encouraged to provide a short proof sketch to provide intuition. 
        \item Inversely, any informal proof provided in the core of the paper should be complemented by formal proofs provided in appendix or supplemental material.
        \item Theorems and Lemmas that the proof relies upon should be properly referenced. 
    \end{itemize}

    \item {\bf Experimental Result Reproducibility}
    \item[] Question: Does the paper fully disclose all the information needed to reproduce the main experimental results of the paper to the extent that it affects the main claims and/or conclusions of the paper (regardless of whether the code and data are provided or not)?
    \item[] Answer: \answerYes{} 
    \item[] Justification: We provide a detailed discussion of the experiment setup, along with the solvers used, in the paper. We have also included the code with instructions on how to run it. All LP formulations are specified, and we also provide a pseudocode for the overall algorithm in the appendix.
    \item[] Guidelines:
    \begin{itemize}
        \item The answer NA means that the paper does not include experiments.
        \item If the paper includes experiments, a No answer to this question will not be perceived well by the reviewers: Making the paper reproducible is important, regardless of whether the code and data are provided or not.
        \item If the contribution is a dataset and/or model, the authors should describe the steps taken to make their results reproducible or verifiable. 
        \item Depending on the contribution, reproducibility can be accomplished in various ways. For example, if the contribution is a novel architecture, describing the architecture fully might suffice, or if the contribution is a specific model and empirical evaluation, it may be necessary to either make it possible for others to replicate the model with the same dataset, or provide access to the model. In general. releasing code and data is often one good way to accomplish this, but reproducibility can also be provided via detailed instructions for how to replicate the results, access to a hosted model (e.g., in the case of a large language model), releasing of a model checkpoint, or other means that are appropriate to the research performed.
        \item While NeurIPS does not require releasing code, the conference does require all submissions to provide some reasonable avenue for reproducibility, which may depend on the nature of the contribution. For example
        \begin{enumerate}
            \item If the contribution is primarily a new algorithm, the paper should make it clear how to reproduce that algorithm.
            \item If the contribution is primarily a new model architecture, the paper should describe the architecture clearly and fully.
            \item If the contribution is a new model (e.g., a large language model), then there should either be a way to access this model for reproducing the results or a way to reproduce the model (e.g., with an open-source dataset or instructions for how to construct the dataset).
            \item We recognize that reproducibility may be tricky in some cases, in which case authors are welcome to describe the particular way they provide for reproducibility. In the case of closed-source models, it may be that access to the model is limited in some way (e.g., to registered users), but it should be possible for other researchers to have some path to reproducing or verifying the results.
        \end{enumerate}
    \end{itemize}

\item {\bf Open access to data and code}
    \item[] Question: Does the paper provide open access to the data and code, with sufficient instructions to faithfully reproduce the main experimental results, as described in supplemental material?
    \item[] Answer: \answerYes{} 
    \item[] Justification: We have included a zip of the code along with instructions. There was no dataset.
    \item[] Guidelines:
    \begin{itemize}
        \item The answer NA means that paper does not include experiments requiring code.
        \item Please see the NeurIPS code and data submission guidelines (\url{https://nips.cc/public/guides/CodeSubmissionPolicy}) for more details.
        \item While we encourage the release of code and data, we understand that this might not be possible, so “No” is an acceptable answer. Papers cannot be rejected simply for not including code, unless this is central to the contribution (e.g., for a new open-source benchmark).
        \item The instructions should contain the exact command and environment needed to run to reproduce the results. See the NeurIPS code and data submission guidelines (\url{https://nips.cc/public/guides/CodeSubmissionPolicy}) for more details.
        \item The authors should provide instructions on data access and preparation, including how to access the raw data, preprocessed data, intermediate data, and generated data, etc.
        \item The authors should provide scripts to reproduce all experimental results for the new proposed method and baselines. If only a subset of experiments are reproducible, they should state which ones are omitted from the script and why.
        \item At submission time, to preserve anonymity, the authors should release anonymized versions (if applicable).
        \item Providing as much information as possible in supplemental material (appended to the paper) is recommended, but including URLs to data and code is permitted.
    \end{itemize}

\item {\bf Experimental Setting/Details}
    \item[] Question: Does the paper specify all the training and test details (e.g., data splits, hyperparameters, how they were chosen, type of optimizer, etc.) necessary to understand the results?
    \item[] Answer: \answerYes{} 
    \item[] Justification: We have specified the solver used. Given these are just using LP formulations of MDPs we didn't have any hyperparameters to select.
    \item[] Guidelines:
    \begin{itemize}
        \item The answer NA means that the paper does not include experiments.
        \item The experimental setting should be presented in the core of the paper to a level of detail that is necessary to appreciate the results and make sense of them.
        \item The full details can be provided either with the code, in appendix, or as supplemental material.
    \end{itemize}

\item {\bf Experiment Statistical Significance}
    \item[] Question: Does the paper report error bars suitably and correctly defined or other appropriate information about the statistical significance of the experiments?
    \item[] Answer: \answerYes{}
    \item[] Justification: We have included the average and standard deviation calculated for five randomly generated instances of each problem size.
    \item[] Guidelines:
    \begin{itemize}
        \item The answer NA means that the paper does not include experiments.
        \item The authors should answer "Yes" if the results are accompanied by error bars, confidence intervals, or statistical significance tests, at least for the experiments that support the main claims of the paper.
        \item The factors of variability that the error bars are capturing should be clearly stated (for example, train/test split, initialization, random drawing of some parameter, or overall run with given experimental conditions).
        \item The method for calculating the error bars should be explained (closed form formula, call to a library function, bootstrap, etc.)
        \item The assumptions made should be given (e.g., Normally distributed errors).
        \item It should be clear whether the error bar is the standard deviation or the standard error of the mean.
        \item It is OK to report 1-sigma error bars, but one should state it. The authors should preferably report a 2-sigma error bar than state that they have a 96\% CI, if the hypothesis of Normality of errors is not verified.
        \item For asymmetric distributions, the authors should be careful not to show in tables or figures symmetric error bars that would yield results that are out of range (e.g. negative error rates).
        \item If error bars are reported in tables or plots, The authors should explain in the text how they were calculated and reference the corresponding figures or tables in the text.
    \end{itemize}

\item {\bf Experiments Compute Resources}
    \item[] Question: For each experiment, does the paper provide sufficient information on the computer resources (type of compute workers, memory, time of execution) needed to reproduce the experiments?
    \item[] Answer: \answerYes{} 
    \item[] Justification: We have provided the information on the computational resources used.,
    \item[] Guidelines:
    \begin{itemize}
        \item The answer NA means that the paper does not include experiments.
        \item The paper should indicate the type of compute workers CPU or GPU, internal cluster, or cloud provider, including relevant memory and storage.
        \item The paper should provide the amount of compute required for each of the individual experimental runs as well as estimate the total compute. 
        \item The paper should disclose whether the full research project required more compute than the experiments reported in the paper (e.g., preliminary or failed experiments that didn't make it into the paper). 
    \end{itemize}
    
\item {\bf Code Of Ethics}
    \item[] Question: Does the research conducted in the paper conform, in every respect, with the NeurIPS Code of Ethics \url{https://neurips.cc/public/EthicsGuidelines}?
    \item[] Answer: \answerYes{} 
    \item[] Justification: We have reviewed the guidelines and ensured our submission meets it.
    \item[] Guidelines:
    \begin{itemize}
        \item The answer NA means that the authors have not reviewed the NeurIPS Code of Ethics.
        \item If the authors answer No, they should explain the special circumstances that require a deviation from the Code of Ethics.
        \item The authors should make sure to preserve anonymity (e.g., if there is a special consideration due to laws or regulations in their jurisdiction).
    \end{itemize}

\item {\bf Broader Impacts}
    \item[] Question: Does the paper discuss both potential positive societal impacts and negative societal impacts of the work performed?
    \item[] Answer: \answerYes{} 
    \item[] Justification: We have include a broader impact section in the appendix.
    \item[] Guidelines:
    \begin{itemize}
        \item The answer NA means that there is no societal impact of the work performed.
        \item If the authors answer NA or No, they should explain why their work has no societal impact or why the paper does not address societal impact.
        \item Examples of negative societal impacts include potential malicious or unintended uses (e.g., disinformation, generating fake profiles, surveillance), fairness considerations (e.g., deployment of technologies that could make decisions that unfairly impact specific groups), privacy considerations, and security considerations.
        \item The conference expects that many papers will be foundational research and not tied to particular applications, let alone deployments. However, if there is a direct path to any negative applications, the authors should point it out. For example, it is legitimate to point out that an improvement in the quality of generative models could be used to generate deepfakes for disinformation. On the other hand, it is not needed to point out that a generic algorithm for optimizing neural networks could enable people to train models that generate Deepfakes faster.
        \item The authors should consider possible harms that could arise when the technology is being used as intended and functioning correctly, harms that could arise when the technology is being used as intended but gives incorrect results, and harms following from (intentional or unintentional) misuse of the technology.
        \item If there are negative societal impacts, the authors could also discuss possible mitigation strategies (e.g., gated release of models, providing defenses in addition to attacks, mechanisms for monitoring misuse, mechanisms to monitor how a system learns from feedback over time, improving the efficiency and accessibility of ML).
    \end{itemize}
    
\item {\bf Safeguards}
    \item[] Question: Does the paper describe safeguards that have been put in place for responsible release of data or models that have a high risk for misuse (e.g., pretrained language models, image generators, or scraped datasets)?
    \item[] Answer: \answerNA{} 
    \item[] Justification: Our method is merely a way to avoid reward misspecification, and as such the safeguards are not relevant.
    \item[] Guidelines:
    \begin{itemize}
        \item The answer NA means that the paper poses no such risks.
        \item Released models that have a high risk for misuse or dual-use should be released with necessary safeguards to allow for controlled use of the model, for example by requiring that users adhere to usage guidelines or restrictions to access the model or implementing safety filters. 
        \item Datasets that have been scraped from the Internet could pose safety risks. The authors should describe how they avoided releasing unsafe images.
        \item We recognize that providing effective safeguards is challenging, and many papers do not require this, but we encourage authors to take this into account and make a best faith effort.
    \end{itemize}

\item {\bf Licenses for existing assets}
    \item[] Question: Are the creators or original owners of assets (e.g., code, data, models), used in the paper, properly credited and are the license and terms of use explicitly mentioned and properly respected?
    \item[] Answer: \answerYes{} 
    \item[] Justification: All baselines used are properly cited. We wrote all the codes included, and they will be released with an open-source license. The only licensed component we use is CPLEX, which is cited in the paper, and we use the no-cost edition.
    \item[] Guidelines:
    \begin{itemize}
        \item The answer NA means that the paper does not use existing assets.
        \item The authors should cite the original paper that produced the code package or dataset.
        \item The authors should state which version of the asset is used and, if possible, include a URL.
        \item The name of the license (e.g., CC-BY 4.0) should be included for each asset.
        \item For scraped data from a particular source (e.g., website), the copyright and terms of service of that source should be provided.
        \item If assets are released, the license, copyright information, and terms of use in the package should be provided. For popular datasets, \url{paperswithcode.com/datasets} has curated licenses for some datasets. Their licensing guide can help determine the license of a dataset.
        \item For existing datasets that are re-packaged, both the original license and the license of the derived asset (if it has changed) should be provided.
        \item If this information is not available online, the authors are encouraged to reach out to the asset's creators.
    \end{itemize}

\item {\bf New Assets}
    \item[] Question: Are new assets introduced in the paper well documented, and is the documentation provided alongside the assets?
    \item[] Answer: \answerNA{} 
    \item[] Justification: There are no assets except the code we ran for the experiments, which are included along with the documentation.
    \item[] Guidelines:
    \begin{itemize}
        \item The answer NA means that the paper does not release new assets.
        \item Researchers should communicate the details of the dataset/code/model as part of their submissions via structured templates. This includes details about training, license, limitations, etc. 
        \item The paper should discuss whether and how consent was obtained from people whose asset is used.
        \item At submission time, remember to anonymize your assets (if applicable). You can either create an anonymized URL or include an anonymized zip file.
    \end{itemize}

\item {\bf Crowdsourcing and Research with Human Subjects}
    \item[] Question: For crowdsourcing experiments and research with human subjects, does the paper include the full text of instructions given to participants and screenshots, if applicable, as well as details about compensation (if any)? 
    \item[] Answer: \answerNA{} 
    \item[] Justification: No user studies performed
    \item[] Guidelines:
    \begin{itemize}
        \item The answer NA means that the paper does not involve crowdsourcing nor research with human subjects.
        \item Including this information in the supplemental material is fine, but if the main contribution of the paper involves human subjects, then as much detail as possible should be included in the main paper. 
        \item According to the NeurIPS Code of Ethics, workers involved in data collection, curation, or other labor should be paid at least the minimum wage in the country of the data collector. 
    \end{itemize}

\item {\bf Institutional Review Board (IRB) Approvals or Equivalent for Research with Human Subjects}
    \item[] Question: Does the paper describe potential risks incurred by study participants, whether such risks were disclosed to the subjects, and whether Institutional Review Board (IRB) approvals (or an equivalent approval/review based on the requirements of your country or institution) were obtained?
    \item[] Answer: \answerNA{} 
    \item[] Justification: No user studies were performed.
    \item[] Guidelines:
    \begin{itemize}
        \item The answer NA means that the paper does not involve crowdsourcing nor research with human subjects.
        \item Depending on the country in which research is conducted, IRB approval (or equivalent) may be required for any human subjects research. If you obtained IRB approval, you should clearly state this in the paper. 
        \item We recognize that the procedures for this may vary significantly between institutions and locations, and we expect authors to adhere to the NeurIPS Code of Ethics and the guidelines for their institution. 
        \item For initial submissions, do not include any information that would break anonymity (if applicable), such as the institution conducting the review.
    \end{itemize}

\end{enumerate}

\end{document}